\documentclass{article}

\usepackage{microtype}
\usepackage{graphicx}
\usepackage{subcaption}
\usepackage{booktabs} 

\usepackage{hyperref}

\usepackage{lipsum}  
\usepackage{placeins}

\usepackage{array}
\usepackage[table]{xcolor} 

\usepackage{makecell}

\usepackage[table]{xcolor}
\usepackage{colortbl}
\usepackage{array}
\usepackage{adjustbox}
\usepackage{subcaption}

\newcommand{\cTwoST}[2]{#1\,{\tiny$\pm$#2}}

\newcommand{\mmdCell}[2]{#1 \tiny{$\pm$#2}}
\usepackage{url}

\newcommand{\code}[1]{\path{#1}}

\usepackage[preprint]{icml2026}

\usepackage{amsmath}
\usepackage{amssymb}
\usepackage{mathtools}
\usepackage{amsthm}
\usepackage{bm}   

\usepackage{booktabs}
\usepackage{multirow}
\usepackage{makecell}
\usepackage{xurl}

\usepackage[capitalize,noabbrev]{cleveref}

\theoremstyle{plain}
\newtheorem{theorem}{Theorem}[section]

\theoremstyle{definition}

\theoremstyle{remark}

\usepackage[textsize=tiny]{todonotes}

\begin{document}

\twocolumn[
\icmltitle{OneFlowSBI: One Model, Many Queries for Simulation-Based Inference}

\icmlsetsymbol{equal}{*}

\begin{icmlauthorlist}
  \icmlauthor{Mayank Nautiyal}{it,scilife}
  \icmlauthor{Li Ju}{it,scilife}

  \icmlauthor{Melker Ernfors}{it}
  \icmlauthor{Klara Hagland}{it}
  \icmlauthor{Ville Holma}{it}
  \icmlauthor{Maximilian Werkö Söderholm}{it}

  \icmlauthor{Andreas Hellander}{it}
  \icmlauthor{Prashant Singh}{it,scilife}
\end{icmlauthorlist}

\icmlaffiliation{it}{
Department of Information Technology, Uppsala University, Uppsala, Sweden
}


\icmlaffiliation{scilife}{
Science for Life Laboratory, Uppsala University, Uppsala, Sweden
}

\icmlcorrespondingauthor{Mayank Nautiyal}{mayank.nautiyal@it.uu.se}

\icmlkeywords{
Simulation-Based Inference, Flow Matching, Amortized Inference, Joint Density Estimation
}

\vskip 0.3in
]

\printAffiliationsAndNotice{}  

\begin{abstract}

We introduce \textit{OneFlowSBI}, a unified framework for simulation-based inference that learns a single flow-matching generative model over the joint distribution of parameters and observations. Leveraging a query-aware masking distribution during training, the same model supports multiple inference tasks, including posterior sampling, likelihood estimation, and arbitrary conditional distributions, without task-specific retraining. We evaluate \textit{OneFlowSBI} on ten benchmark inference problems and two high-dimensional real-world inverse problems across multiple simulation budgets. \textit{OneFlowSBI} is shown to deliver competitive performance against state-of-the-art generalized inference solvers and specialized posterior estimators, while enabling efficient sampling with few ODE integration steps and remaining robust under noisy and partially observed data.


\end{abstract}

\section{Introduction}\label{sec:intro}
Mechanistic models and simulators are indispensable tools that complement theoretical and experimental science, enabling a deeper understanding of physical systems and accelerating scientific discovery. Simulation-based inference (SBI) has gained increasing prominence recently in various domains such as systems biology, epidemiology, physics, and climate science, as means to fit observed data to simulators \citep{Cranmer20}. Once calibrated in this way, the simulators can be used to evaluate \emph{what-if} scenarios, or explore various behaviors present in the underlying physical system.
However, high-fidelity simulators are often characterized by significant complexity, rendering the model likelihood either analytically intractable or computationally prohibitive to evaluate. Because the likelihood is frequently defined implicitly through marginalization over unobserved latent variables \citep{Lavin2021SimulationIT}, classical likelihood-based inference methods become impractical. SBI addresses this by learning surrogate models from samples $(\bm{\theta},\mathbf{y})\sim p(\bm{\theta})p(\mathbf{y}\mid\bm{\theta})$, where $p(\bm{\theta})$ is a user defined prior encoding domain knowledge or plausibility constraints, and $p(\mathbf{y}\mid\bm{\theta})$ is accessed only through simulator calls, without explicit likelihood evaluation \citep{sisson2018handbook}. These learned surrogates are then used within Bayesian inference pipelines to approximate a range of inference targets, including posterior distributions, likelihood functions, marginal estimates, and conditional distributions, under limited simulation budgets.



Most existing SBI methods are designed as specialized solvers for a single inference target, commonly the posterior, using expressive conditional generative models \citep{Greenberg19, papamakarios2016fast, Ramesh22, geffner2023compositional, fmpe}. While effective for standard posterior inference, this paradigm becomes restrictive in generalized settings where observations may be missing, partially observed, or corrupted by measurement noise, and where downstream workflows require access beyond the full posterior -- such as likelihood surrogates for model criticism \citep{Radev23}. These considerations motivate learning a reusable representation of the joint distribution $p(\bm{\theta},\mathbf{y})$ that supports flexible conditional queries \citep{gloecklerall}. However, learning the full joint distribution is substantially more challenging than learning the posterior alone, particularly when the parameter space is low-dimensional but the observations are high-dimensional (e.g., time-series/spatiotemporal modeling). This difficulty is further amplified by modality mismatch, where the structural disparity between vector-valued parameters and structured observations (e.g., images or trajectories) makes scalable joint modeling non-trivial.

To address these challenges of dimensionality and modality mismatch in the generalized inference setting, we introduce a unified framework that learns a single generative model of the joint distribution $p(\bm{\theta},\mathbf{y})$ via flow matching. Leveraging a stable regression objective rooted in optimal transport geometry, the proposed \textit{OneFlowSBI} framework effectively bridges the structural gap between parameters and high-dimensional observations and supports efficient sampling through favorable flow geometry. This approach utilizes dynamic coordinate masking to enable a single trained model to answer a wide range of inference queries, including posterior estimation, likelihood evaluation, and arbitrary conditionals--without retraining. Empirically, \textit{OneFlowSBI} closely matches or exceeds specialized methods across a majority of benchmarks and simulation budgets while retaining full generality, specifically outperforming baselines on complex multimodal distributions.


\textbf{Contributions.}
\textbf{(i)} We introduce \textit{OneFlowSBI}, a unified joint flow-matching framework that enables diverse inferential queries within a single, masked generative model.

\textbf{(ii)} We derive a mask-aware flow formulation using linear probability paths that enforce straight transport geometry; enabling high-fidelity conditional sampling with minimal ODE discretization overhead.

\textbf{(iii)} The proposed framework is empirically validated on ten benchmark problems with varying structural and statistical difficulty, together with two high-dimensional inverse problems, demonstrating competitive accuracy and robustness under noisy and partially observed data.

\textbf{Outline.}
Section~\ref{sec:related} reviews related work. Section~\ref{sec:preliminaries} introduces the preliminaries. Section~\ref{sec:method} presents the proposed method. Results and ablations are reported in Sections~\ref{sec:experiments} and~\ref{sec:ablations}. Section~\ref{sec:conclusion} concludes the paper.

\section{Related Work}\label{sec:related}
The related literature is organized into three broad categories, covering specialized posterior solvers, likelihood-based SBI approaches, and generalized SBI frameworks.

\noindent\textbf{Specialized posterior solvers.}
A large body of neural SBI work focuses on learning the posterior $p(\bm{\theta}\mid \mathbf{y})$ directly from simulated pairs $\{(\bm{\theta}_i,\mathbf{y}_i)\}_{i=1}^N$ using expressive conditional generative models. A notable advantage is amortization: once trained, the model provides posterior inference for any new observation via a single forward pass, eliminating the need for per-instance optimization. This contrasts starkly with classical likelihood-free methods such as Approximate Bayesian Computation (ABC) \cite{Beaumont2002} and Sequential Monte Carlo ABC (SMC-ABC) \citep{Toni2009}, which require repeated simulation with acceptance or weighting schemes, hand-crafted summary statistics, and distance metrics that may scale poorly to high-dimensional observations \citep{sisson2018handbook}.

Neural posterior estimation (NPE) is an early amortized posterior inference approach, learning a conditional density $q_\psi(\bm{\theta}\mid \mathbf{y})$ from simulations for fast posterior sampling \citep{papamakarios2016fast}. Sequential variants improve inference performance at a fixed simulation budget by iteratively refining the estimator and concentrating simulations in regions of high posterior mass \citep{papamakarios2016fast,NIPS2017_addfa9b7,Greenberg19}. Beyond normalizing flows, posterior inference has been explored with adversarial training \citep{Ramesh22} and continuous-time methods such as flow matching posterior estimation (FMPE), which learns conditional vector fields via flow matching objectives \citep{fmpe}. Despite strong empirical performance, these methods specialize to $p(\bm{\theta}\mid \mathbf{y})$ and do not readily support alternative inference queries, missing observations, or corrupted data without retraining or additional inference mechanisms.

\noindent\textbf{Likelihood-based solvers.}
A complementary approach targets the simulator likelihood rather than the posterior. Neural likelihood estimation (NLE) trains a conditional density model $q_\psi(\mathbf{y}\mid \bm{\theta})\approx p(\mathbf{y}\mid \bm{\theta})$ by maximizing its conditional log-likelihood on simulated pairs, typically using normalizing flows as flexible likelihood surrogates \citep{Cranmer20}. This separation is attractive because a learned likelihood can be reused across priors and supports downstream Bayesian queries beyond posterior sampling. However, in practice, posterior inference requires additional MCMC steps to draw samples, introducing computational overhead and tuning considerations at inference time. Sequential Neural Likelihood (SNL) iteratively refines the surrogate likelihood by selecting parameters from the current approximate posterior, requiring an inference step at each round \citep{pmlr-v89-papamakarios19a,snl}. NLE is often more effective when the parameter space is high-dimensional, whereas NPE is typically preferable when the observation space is high-dimensional, since mapping high-dimensional inputs to low-dimensional outputs is generally easier \citep{Cranmer20}.

\begin{figure*}[htbp]
    \centering
    \includegraphics[width=0.80\textwidth]{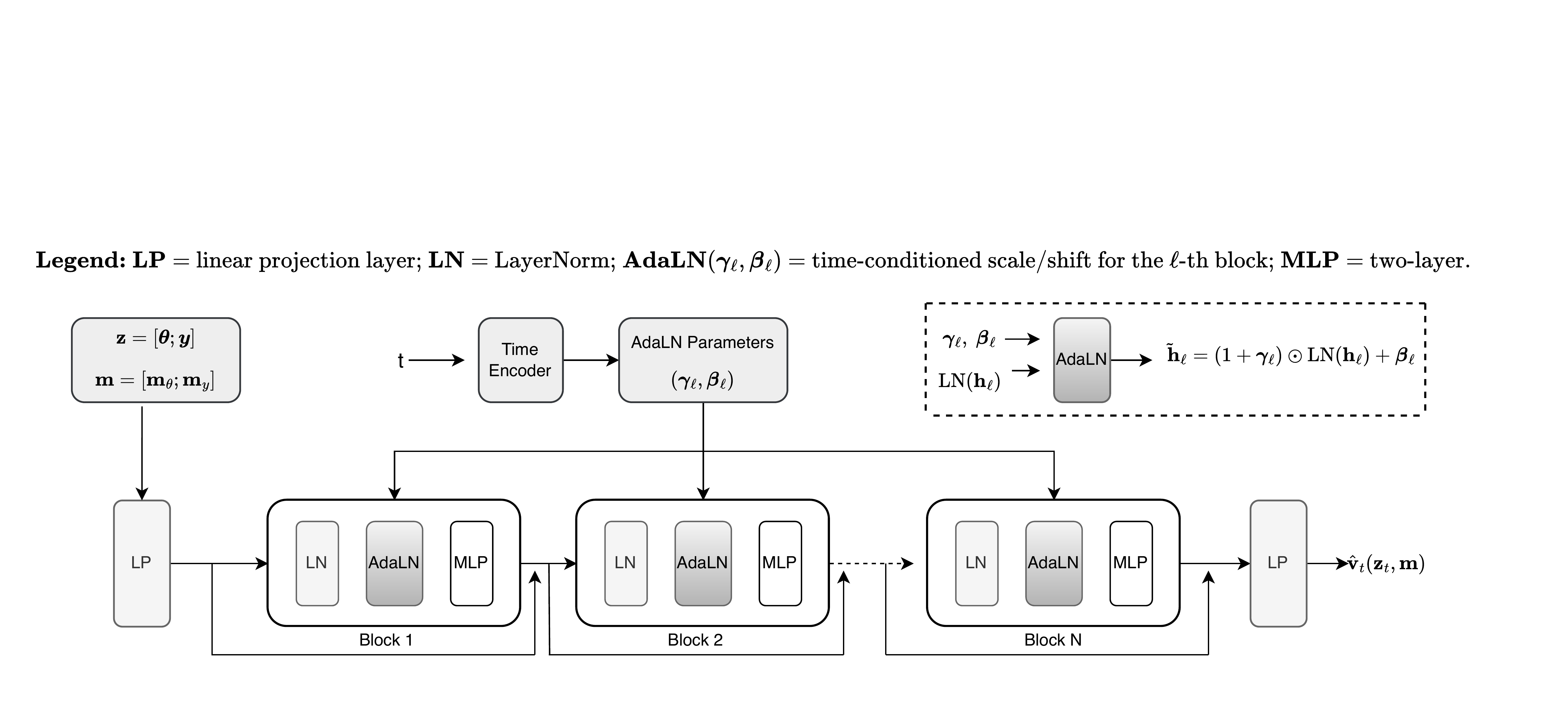}
    \caption{\textbf{OneFlowSBI architecture.}
    The flow network operates on the joint state $\mathbf{z}_t=(\bm{\theta}_t,\mathbf{y}_t)$, where observed and latent components are controlled by a binary mask $\mathbf{m}$. Conditioned on time $t$, the network learns a masked velocity field, yielding a single model of the joint distribution $p(\bm{\theta}, \mathbf{y})$.}
    \label{fig:arch}
\end{figure*}

\noindent\textbf{Generalized SBI solvers.}
Several methods target the joint distribution $p(\bm{\theta}, \mathbf{y})$ to enable multiple inference queries through conditioning. Jointly Amortized Neural Approximation of Complex Bayesian Models (JANA) trains a summary network, posterior estimator, and likelihood model simultaneously, requiring careful balancing of multiple training objectives across networks \citep{Radev23}. Similarly, Sequential Neural Posterior and Likelihood Approximation (SNPLA) learns both posterior and likelihood models using normalizing flows sequentially, but maintains separate estimators without a unified mechanism for arbitrary conditionals \citep{wiqvist2021sequential}. In contrast, Simformer trains a single Transformer-based score model on $p(\bm{\theta}, \mathbf{y})$ and uses conditioning masks over tokenized variables to generate diverse conditionals from one set of parameters \citep{gloecklerall}. However, the Transformer-backbone brings complexity: self-attention scales quadratically with token count, reverse-time diffusion requires several iterative steps at inference, and the combination introduces a large hyperparameter space that can be challenging to tune. These limitations become particularly pronounced in high-dimensional settings and when handling heterogeneous modalities such as images and spatiotemporal time series, where token counts, memory usage, and inference latency grow rapidly.

\section{Preliminaries}\label{sec:preliminaries}
\textbf{Flow matching.}
Given data \(\{\mathbf{x}_i\}_{i=1}^n\) with \(\mathbf{x}_i \sim q(\mathbf{x})\), flow matching \citep{lipman2023flow} learns a generative model of \(q(\mathbf{x})\) by constructing a continuous-time path of distributions \(\{p_t\}_{t\in[0,1]}\) that connects a simple base distribution \(p_0\) to the data distribution. Training minimizes the conditional flow matching (CFM) objective,
\begin{equation}
\mathcal{L}_{\mathrm{CFM}}(\phi)
=\mathbb{E}_{t,\,\mathbf{x}_1,\,\mathbf{x}_t}
\!\left[\left\|\mathbf{v}_t^{\phi}(\mathbf{x}_t)-\mathbf{v}_t(\mathbf{x}_t\mid \mathbf{x}_1)\right\|_2^2\right],
\label{eq:cfm-loss}
\end{equation}
where \(\mathbf{x}_1\sim q(\mathbf{x})\), \(\mathbf{x}_t\sim p_t(\mathbf{x}_t\mid \mathbf{x}_1)\), and \(\mathbf{v}_t^{\phi}\) is a neural time-dependent vector field with parameters \(\phi\), while \(\mathbf{v}_t(\mathbf{x}_t\mid \mathbf{x}_1)\) denotes the conditional target velocity. This conditional objective is tractable and equivalent to marginal flow matching since \(p_t(\mathbf{x}_t)=\mathbb{E}_{\mathbf{x}_1\sim q(\mathbf{x})}\!\left[p_t(\mathbf{x}_t\mid \mathbf{x}_1)\right]\). A common choice is optimal transport (OT), which couples \(\mathbf{x}_0\sim p_0\) with \(\mathbf{x}_1\sim q(\mathbf{x})\) via \(\mathbf{x}_t=t\,\mathbf{x}_1+(1-t)\,\mathbf{x}_0\), yielding a constant conditional velocity \(\mathbf{v}_t(\mathbf{x}_t\mid \mathbf{x}_1)=\mathbf{x}_1-\mathbf{x}_0\). Sampling is performed by solving the ODE \(\frac{d\mathbf{x}(t)}{dt}=\mathbf{v}_t^{\phi}(\mathbf{x}(t))\) forward from \(t=0\) to \(t=1\) with \(\mathbf{x}_0\sim p_0\), so that \(\mathbf{x}_1\sim q(\mathbf{x})\).

\textbf{Masking and Conditional Generation.}
Mask-based conditioning provides a flexible mechanism for learning conditional distributions within a single generative model by explicitly encoding \textit{missingness}. The paradigm originates from image inpainting, where binary masks distinguish observed pixels from missing regions, enabling models to perform completion under arbitrary masking patterns without task-specific retraining \citep{9880056}. Related ideas appear in masked token prediction for transformer-based language and vision models \citep{Devlin2019BERTPO,9879206,9878676}, where prediction targets are defined by arbitrary masking configurations.

Flow matching naturally complements mask-based conditioning through its flexibility in specifying reference distributions \citep{lipman2023flow}. Initializing generation from appropriately constructed reference distributions allows the induced flow to selectively evolve only the masked variables, while preserving observed coordinates as fixed boundary conditions throughout generation. This enables a single flow-matching model to parameterize a broad family of conditional distributions, with different conditioning patterns instantiated simply by varying the mask at generation time, without requiring separate models or retraining.

Motivated by these properties, we combine flow matching with mask-based conditioning to construct a unified framework for SBI. A single model is trained to represent the joint distribution \(p(\bm{\theta}, \mathbf{y})\), from which arbitrary conditional distributions are realized through selective masking at inference time. The formal construction and training objective are presented in the next section.

\section{Method}\label{sec:method}
We model the joint distribution \(p(\bm{\theta},\mathbf{y})\) using a single flow-matching generative model, where \(\bm{\theta}\in\mathbb{R}^{d_\theta}\) and \(\mathbf{y}\in\mathbb{R}^{d_y}\) denote simulator parameters and observations of dimensions \(d_\theta\) and \(d_y\), respectively. Let \(\mathbf{z}=[\bm{\theta};\mathbf{y}]\in\mathbb{R}^{d_\theta+d_y}\) denote the joint latent variable, which defines the state space evolved by the flow. We introduce a binary mask \(\mathbf{m}\in\{0,1\}^{d_\theta+d_y}\), where \(m_i = 1\) denotes an observed coordinate that remains fixed throughout the flow, and \(m_i = 0\) denotes an unobserved coordinate that is transported by the flow. The mask defines subspaces over which probability mass is transported, encouraging a joint vector field that remains consistent across diverse conditioning patterns.

Building on the flow-matching formulation introduced in Section~\ref{sec:related}, we consider the following masked linear interpolant satisfying the boundary conditions $\mathbf{z}_0 \sim p_0$, where $p_0$ is a standard normal distribution defined on the unobserved subspace, and $\mathbf{z}_1 \sim p(\bm{\theta})p(\mathbf{y}\mid\bm{\theta})$, which is defined for $t \in [0,1]$ as,
\begin{equation}
\mathbf{z}_t
=
\mathbf{m}\odot \mathbf{z}_1
+
(\mathbf{1}-\mathbf{m})\odot\big(t\,\mathbf{z}_1+(1-t)\,\mathbf{z}_0\big),
\label{eq:masked-interpolant}
\end{equation}
which induces a tractable conditional path \(p_t(\mathbf{z}_t\mid \mathbf{z}_1,\mathbf{m})\). Under this construction, transport occurs only along the unobserved coordinates, yielding a constant conditional target velocity along each trajectory,
\begin{equation}
\mathbf{v}_t(\mathbf{z}_t\mid \mathbf{z}_1,\mathbf{m})
=
(\mathbf{1}-\mathbf{m})\odot(\mathbf{z}_1-\mathbf{z}_0).
\label{eq:masked-target-vel}
\end{equation}
To train the model, we define a masked conditional flow-matching objective derived from the conditional flow-matching loss in Eq.~\eqref{eq:cfm-loss}. We learn a time-dependent neural vector field $\mathbf{v}_t^{\phi}(\mathbf{z}_t,\mathbf{m})$ by matching the target velocity only on the coordinates designated as unobserved by the mask. We define the masked residual
$\mathbf{r}_t^{\phi}
:= \mathbf{m}^c \odot \big(\mathbf{v}_t^{\phi}(\mathbf{z}_t,\mathbf{m})
- \mathbf{v}_t(\mathbf{z}_t \mid \mathbf{z}_1,\mathbf{m})\big)$,
where $\mathbf{m}^c=\mathbf{1}-\mathbf{m}$ selects the unknown coordinates. The resulting \textit{OneFlowSBI} loss is,
\begin{equation}
\mathcal{L}_{\text{OneFlowSBI}}(\phi)
=
\mathbb{E}_{t,\,\mathbf{z}_1,\,\mathbf{z}_t,\,\mathbf{m}}
\!\left[
\frac{1}{|\mathbf{m}^c|}
\left\|
\boldsymbol{w}^{1/2}\odot \mathbf{r}_t^{\phi}
\right\|_2^2
\right],
\label{eq:oneflowsbi-loss}
\end{equation}
where $|\mathbf{m}^c|=\sum_i m_i^c$ normalizes for different numbers of generated coordinates. In SBI, observations often have much higher dimensionality than parameters ($d_y \gg d_\theta$), causing the loss to be dominated by the observation block. To balance this, we set $w_i = \lambda_\theta$ for parameter dimensions ($i \leq d_\theta$) and $w_i = 1$ for observation dimensions ($i > d_\theta$), with $\lambda_\theta = d_y / d_\theta$ to equalize gradient magnitudes across the joint space. This masking mechanism ensures the joint vector field remains consistent across diverse conditioning patterns, enabling inference under arbitrary combinations of observed and unobserved variables.

\textbf{Masking Approach.} To optimize training for core SBI tasks, we define a masking distribution $p(\mathbf{m})$ as a mixture of primary inference modes,
\begin{equation}
p(\mathbf{m}) = \alpha\,\delta_{\mathbf{m}_{\text{post}}} + \beta\,\delta_{\mathbf{m}_{\text{like}}} + (1 - \alpha - \beta)\,p_{\text{partial}}(\mathbf{m}),
\label{eq:mask-distribution}
\end{equation}
where weights $\alpha, \beta \ge 0$ satisfy $\alpha + \beta \le 1$. Here, $\delta_{\mathbf{m}}$ denotes a Dirac point mass concentrated at $\mathbf{m}$, with $\mathbf{m}_{\text{post}} = [\mathbf{1}_{d_\theta}; \mathbf{0}_{d_y}]$ and $\mathbf{m}_{\text{like}} = [\mathbf{0}_{d_\theta}; \mathbf{1}_{d_y}]$ targeting full posterior and likelihood inference, respectively. The component $p_{\text{partial}}$ facilitates broader generalization via a hierarchical sampling scheme: we draw hyperparameters $\pi_\theta, \pi_y \sim \text{Beta}(0.5, 0.5)$ and sample mask bits independently as $m_i \sim \text{Bernoulli}(\pi_\theta)$ for $i \le d_\theta$ and $m_j \sim \text{Bernoulli}(\pi_y)$ for $j > d_\theta$. We set $\alpha = \beta = 0.15$, allocating $30\%$ of training probability mass to the two core SBI queries, while the remaining $70\%$ is assigned to partial conditioning. These weights lightly bias training toward the primary inference targets (posterior and likelihood) and can be adjusted to emphasize specific tasks as needed.

\textbf{Inference via masking.}
Inference is carried out by integrating a mask-constrained ODE on the joint state space. For a fixed mask $\mathbf{m}$, we initialize
$\mathbf{z}_0=\mathbf{m}\odot \mathbf{z}_{\mathrm{obs}}+(\mathbf{1}-\mathbf{m})\odot \bm{\varepsilon}$ with
$\bm{\varepsilon}\sim\mathcal{N}(\mathbf{0},\mathbf{I})$, and evolve the trajectory $\mathbf{z}_t$ as,
\begin{equation}
\frac{d\mathbf{z}_t}{dt}
=
(\mathbf{1}-\mathbf{m})\odot \mathbf{v}^{\phi}_t(\mathbf{z}_t,\mathbf{m}),
\label{eq:oneflowsbi-ode}
\end{equation}
for $t\in[0,1]$. These dynamics leave the masked coordinates invariant, i.e.\ $\mathbf{m}\odot \mathbf{z}_t=\mathbf{m}\odot \mathbf{z}_{\mathrm{obs}}$ for all $t$, while transporting probability mass only along the complementary subspace. The terminal state $\mathbf{z}_1$ therefore constitutes a sample from the conditional distribution specified by the mask.

Standard SBI tasks are recovered as special cases of the joint model by applying specific mask configurations (Table~\ref{tab:masking_tasks}). By clamping observed variables and integrating the vector field over the remaining coordinates, a single model can transition between posterior sampling, likelihood evaluation, and joint modeling. This framework naturally handles missing data by leaving the corresponding dimensions unmasked. More generally, the model supports arbitrary mixed conditionals $p(\mathbf{z}_A \mid \mathbf{z}_B)$ for any disjoint subsets $A$ and $B$, enabling flexible inference across diverse observation patterns without task-specific architectures or retraining.

\vspace{-0.2cm}
\begin{table}[h]
\centering
\caption{\textbf{Mask specifications for SBI.}
Block masks use \(\mathbf{0}\) for generated variables, \(\mathbf{1}\) for conditioned variables, and \(\times\) for ignored variables.
For mixed conditionals, the full variable vector \(\mathbf{z}\) is partitioned into disjoint subsets
\(A\) (generated) and \(B\) (conditioned), with dimensions
\(d_A = |A|\) and \(d_B = |B|\).}
\label{tab:masking_tasks}
\vspace{1mm}
\begin{small}
\begin{tabular}{l l c}
\toprule
\textbf{Task} & \textbf{Target density} & \textbf{Mask \(\mathbf{m}\)} \\
\midrule
Posterior
& \(p(\bm{\theta}\mid \mathbf{y}_{\mathrm{obs}})\)
& \([\mathbf{0}_{d_\theta};\,\mathbf{1}_{d_y}]\) \\

Likelihood
& \(p(\mathbf{y}\mid \bm{\theta}_{\mathrm{obs}})\)
& \([\mathbf{1}_{d_\theta};\,\mathbf{0}_{d_y}]\) \\

Joint
& \(p(\bm{\theta},\mathbf{y})\)
& \([\mathbf{0}_{d_\theta};\,\mathbf{0}_{d_y}]\) \\

Prior
& \(p(\bm{\theta})\)
& \([\mathbf{0}_{d_\theta};\,\times]\) \\

Marginal
& \(p(\mathbf{y})\)
& \([\times;\,\mathbf{0}_{d_y}]\) \\

Mixed conditional
& \(p(\mathbf{z}_A \mid \mathbf{z}_B)\)
& \([\mathbf{0}_{d_A};\,\mathbf{1}_{d_B}]\) \\
\bottomrule
\end{tabular}
\end{small}
\end{table}


\vspace{-0.2cm}
\textbf{Computational complexity.}
Training requires a single evaluation of the vector field $\mathbf{v}_t^{\phi}(\mathbf{z}_t,\mathbf{m})$ per sampled tuple $(\mathbf{z}_1,\mathbf{z}_0,t,\mathbf{m})$, as the objective in Eq.~\eqref{eq:oneflowsbi-loss} consists of a single regression target along the masked probability path. At inference, generating one sample amounts to integrating a deterministic ODE with $K$ discretization steps, resulting in $O(K)$ evaluations of $\mathbf{v}_t^{\phi}$ per sample. This cost is independent of the conditioning pattern - the mask only restricts which coordinates are updated through element-wise gating, while the vector field is evaluated once per step on the full joint state. Owing to the masked linear interpolant and the resulting smooth, near-linear transport geometry, accurate sampling is typically achieved with a small number of integration steps \citep{osti_10445517}.

\noindent\textbf{Model architecture.}
The \textit{OneFlowSBI} model architecture is shown in Figure~\ref{fig:arch}. We parameterize the masked vector field \(\mathbf{v}_t^{\phi}(\mathbf{z}_t,\mathbf{m})\) using a lightweight residual MLP with Adaptive LayerNorm (AdaLN) blocks~\citep{Peebles_2023_ICCV}. The joint state \(\mathbf{z}_t\) and mask \(\mathbf{m}\) are mapped to a hidden representation via a linear projection, while time \(t\) is encoded using a sinusoidal embedding~\citep{NIPS2017_3f5ee243} and transformed into AdaLN scale and shift parameters \((\boldsymbol{\gamma}_\ell,\boldsymbol{\beta}_\ell)\). The backbone consists of \(L\) residual blocks indexed by \(\ell\), each applying LayerNorm, AdaLN modulation, and a two-layer MLP,
\begin{equation}
\mathbf{h}_{\ell+1}
=
\mathbf{h}_{\ell}
+
\mathrm{MLP}_{\ell}\!\Big((\mathbf{1}+\boldsymbol{\gamma}_\ell)\odot \mathrm{LN}(\mathbf{h}_\ell)+\boldsymbol{\beta}_\ell\Big).
\end{equation}
A final linear projection outputs the velocity in \(\mathbb{R}^{d_\theta+d_y}\). We zero-initialize the AdaLN parameter heads and the output projection so that \(\mathbf{v}_t^{\phi}(\mathbf{z}_t,\mathbf{m})\approx\mathbf{0}\) at initialization, improving training stability.

\begin{figure*}[h]
  \centering
  \includegraphics[width=0.75\textwidth]{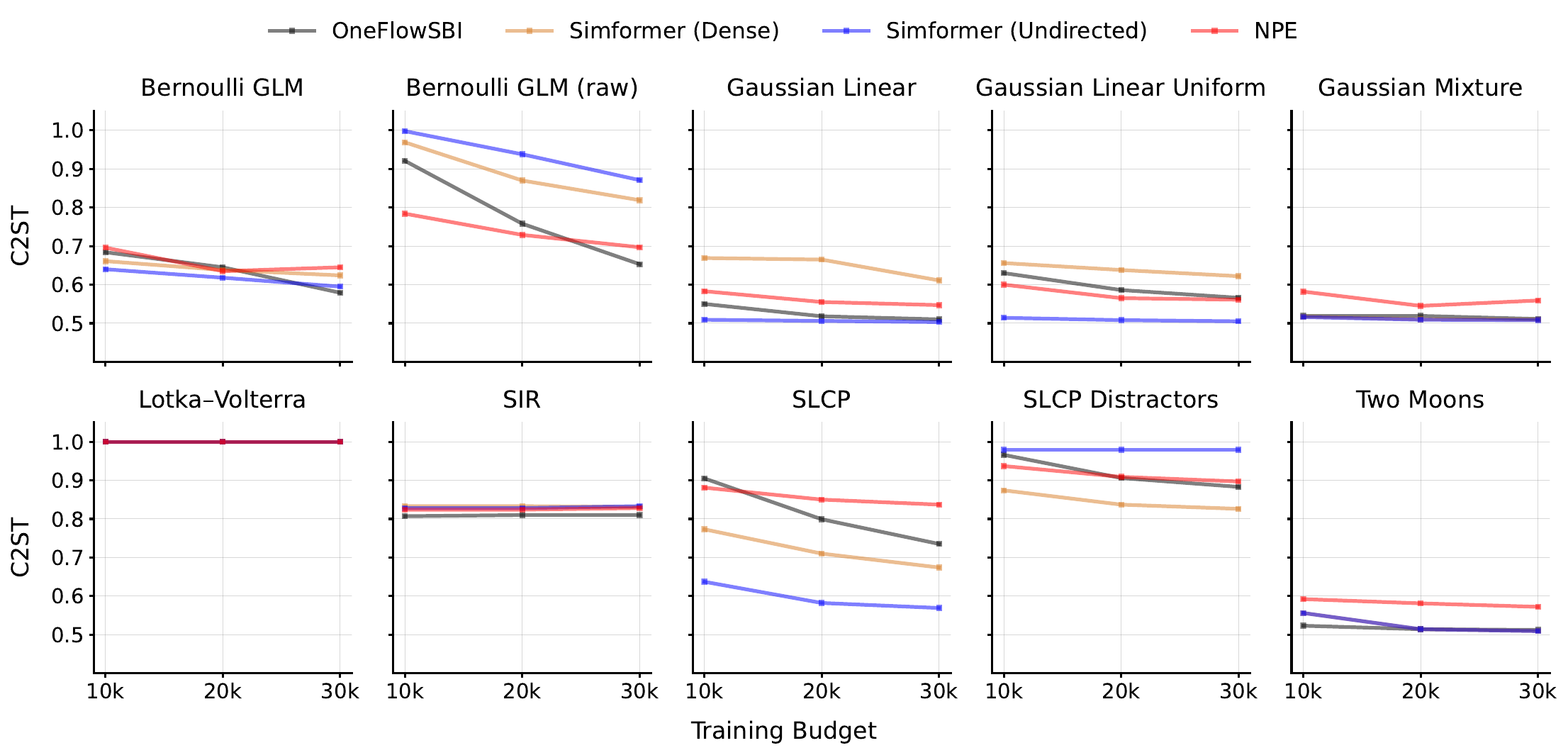}
\caption{\textbf{SBIBM benchmark results.}
C2ST across ten benchmark tasks and three simulation budgets ($10{,}000$; $20{,}000$; $30{,}000$), comparing \textit{OneFlowSBI} with Simformer (Dense), Simformer (Undirected), and NPE. Lower C2ST indicates higher posterior fidelity.}
  \label{fig:c2st-comparison}
\end{figure*}

\section{Experiments}\label{sec:experiments}
We evaluate \textit{OneFlowSBI} on the Simulation-Based Inference Benchmark (SBIBM)~\citep{lueckmann2021benchmarking}, comprising 10 tasks covering diverse inference regimes including multimodal posteriors, strong parameter-observation dependencies, noisy observations, and moderately high-dimensional spaces. We also consider two high-dimensional real-world problems: Bayesian image deblurring with implicit priors over Fashion-MNIST ($\bm{\theta}, \mathbf{y} \in \mathbb{R}^{784}$)~\citep{Radev23}, and the shallow water model, which infers basin depth profiles ($\bm{\theta}\in\mathbb{R}^{100}$) from spatiotemporal wave fields ($\mathbf{y}\in\mathbb{R}^{20{,}200}$)~\citep{Ramesh22}. These tasks evaluate scalability across heterogeneous modalities, from image-based to time-dependent physical field inference.

\subsection{Experimental setup}
To study how performance scales with simulation budget (often the primary bottleneck when simulator calls are expensive), we train models with $10{,}000$, $20{,}000$, and $30{,}000$ simulated pairs, reporting results over $5$ independent runs for each SBIBM task and budget. At evaluation, we approximate posteriors for $10$ held-out test observations by drawing $10{,}000$ samples under the posterior mask, conditioning on the observation block while generating the parameter block. Posterior accuracy is measured using the classifier two-sample test (C2ST; \citealp{Friedman03}), comparing learned posterior samples against SBIBM reference posteriors obtained analytically or via high-fidelity numerical approximations such as long-run MCMC \citep{lueckmann2021benchmarking}. C2ST accuracy varies within $[0.5, 1]$, with endpoints $0.5$ (perfect match) and $1$ (complete mismatch).

We compare \textit{OneFlowSBI} against Simformer~\citep{gloecklerall} in two configurations, a black-box dense setting with full interactions between all components of the parameters and data, and a structured undirected setting with mask-restricted interactions reflecting known simulator structure, alongside NPE, a widely used specialized posterior estimator that serves as a standard SBI baseline~\citep{papamakarios2016fast}. We also report maximum mean discrepancy (MMD) as a complementary metric in the supplementary material, with hyperparameters in Appendix~\ref{app:hyperparam} and full results tables and additional analyses (including posterior plots) in Appendices~\ref{app:add_results} and~\ref{app:tables}.

\begin{figure*}[t]
  \centering
  \includegraphics[width=0.68\textwidth]{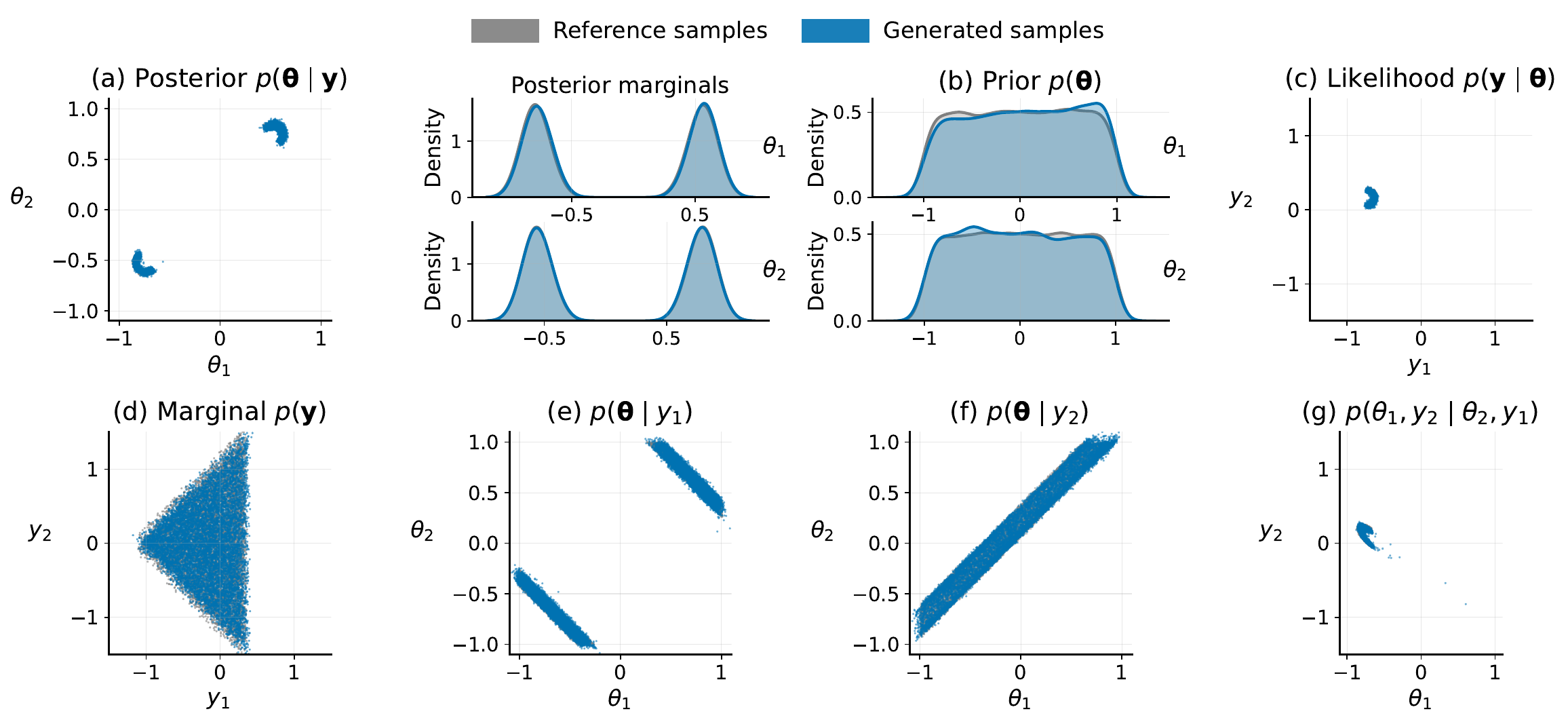}
  \caption{\textbf{Multi-query inference on Two Moons.} 
Using a single \textit{OneFlowSBI} model, we target diverse densities $p(\cdot|\cdot)$ solely by varying the inference mask. 
Panels show the (a) posterior, (b) prior, (c) likelihood, and (d) evidence, alongside (e–g) arbitrary partial conditionals.}
  \label{fig:two-moons}
\end{figure*}

\vspace{-0.2cm}
\subsection{Benchmark results}\label{sec:results}
Figure~\ref{fig:c2st-comparison} summarizes C2ST performance across ten SBIBM benchmarks under three simulation budgets, comparing \textit{OneFlowSBI} against NPE, Simformer-Dense (Simformer-D), and Simformer-Undirected (Simformer-U). To analyze these results, we categorize the benchmarks into three regimes based on dimensionality and structural complexity:

\textbf{Low-Dimensional and Structured Tasks.} In low-dimensional regimes such as Two Moons and the Gaussian Mixture Model (GMM), \textit{OneFlowSBI} achieves near-optimal C2ST scores (approximately $0.5$), indicating high posterior fidelity. Performance is comparable to both Simformer variants across these benchmarks, while substantially outperforming NPE. On the Gaussian Linear task, \textit{OneFlowSBI} matches the performance of Simformer-U while outperforming NPE and Simformer-D. In the non-conjugate Gaussian Linear Uniform setting, Simformer-U leverages its structured masking to achieve the highest accuracy, though \textit{OneFlowSBI} remains competitive and consistently surpasses NPE.

\textbf{High-Dimensionality and Distractors.} Comparing Simple Likelihood Complex Posterior (SLCP) with its Distractors variant (with 92 noise dimensions) reveals critical trade-off in architectural inductive biases. Simformer-U excels on vanilla SLCP by exploiting known i.i.d. observation structure through its mask, but degrades sharply on Distractors where restricted masking prevents the interactions needed to separate signal from high-dimensional noise. Conversely, Simformer-D leverages unrestricted global attention to effectively filter distractors. Notably, \textit{OneFlowSBI} remains robust and outperforms the NPE baseline across both datasets. This result is achieved despite learning the full joint distribution without attention mechanisms which requires dedicating substantial capacity to irrelevant distractor structure yet still allows accurate recovery of the complex multi modal posterior geometry.


\textbf{Biological and Ecological Models.} \textit{OneFlowSBI} achieves the lowest C2ST scores on the Bernoulli Generalized Linear Model (GLM) and its high-dimensional variant (GLM-Raw) at the largest simulation budgets. Similar accuracy gains are observed on the Susceptible-Infected-Recovered (SIR) model. For Lotka-Volterra, all methods yield C2ST values near 1.0. This is a consequence of the extremely concentrated reference posterior (per-dimension variance $\approx 10^{-4}$), which leads to C2ST saturation and limits the metric's discriminative utility in this specific regime.

\textbf{Unified multi-target inference.}
To demonstrate multi-query inference with a single trained model, we present qualitative results on the Two Moons task in Figure~\ref{fig:two-moons}. We obtain different inference queries by modifying only the conditioning mask, generating $10{,}000$ samples for seven settings: (a) posterior inference $p(\bm{\theta}\mid\mathbf{y})$, (b) prior recovery $p(\bm{\theta})$, (c) likelihood sampling $p(\mathbf{y}\mid\bm{\theta})$, (d) the data marginal $p(\mathbf{y})$, (e,f) partial conditionals $p(\bm{\theta}\mid y_1)$ and $p(\bm{\theta}\mid y_2)$, and (g) a mixed conditional $p(\theta_1, y_2 \mid \theta_2, y_1)$. Reference samples for posteriors are provided by SBIBM, while likelihoods and marginals follow the benchmark specification. For partial and mixed conditionals, references are generated using hybrid Slice sampling and Gaussian Metropolis-Hastings~\citep{gloecklerall}. \textit{OneFlowSBI} accurately matches reference distributions across all queries, faithfully recovering the bimodal observation structure, multimodal posterior geometry, uniform prior, and lower-dimensional manifolds under partial conditioning. Notably, the partial and mixed conditionals in (e,f,g) demonstrate coherent dependencies that would each require training a separate model in traditional SBI approaches. This flexibility to query arbitrary conditionals through masking alone makes \textit{OneFlowSBI} valuable for exploratory workflows where the inference task may not be fully specified in advance.

\subsection{Real World Problems}
\begin{figure*}[h]
  \centering
  \includegraphics[width=0.85\textwidth]{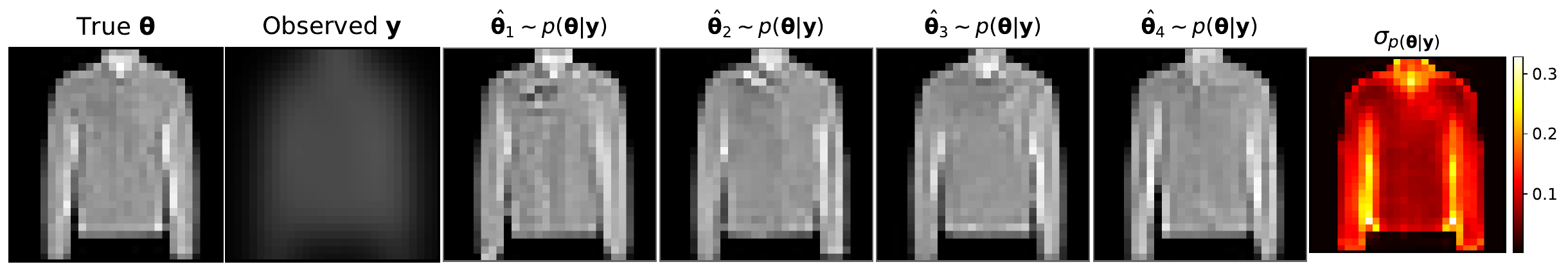}
  \caption{\textbf{Posterior inference for Bayesian image deblurring.} We visualize the ability of \textit{OneFlowSBI} to recover the posterior $p(\bm{\theta}\mid\mathbf{y})$ from a noisy, blurred observation~$\mathbf{y}$. The posterior samples $\hat{\bm{\theta}}_{1:4}$ demonstrate high-frequency detail recovery, while the uncertainty map $\sigma$ (right) effectively captures the combined effects of pixel-level noise and information loss due to the blur kernel, particularly at object boundaries.}
  \label{fig:mnist_main}
\end{figure*}

\textbf{Bayesian image deblurring.} This test problem involves 784-dimensional noisy, blurred observations from Fashion-MNIST, with the goal being inference of the corresponding clean image  \citep{Ramesh22,Radev23}. This task is challenging as blurring destroys fine-scale spatial information while noise introduces further ambiguity, yielding a multimodal posterior where multiple distinct images can plausibly explain the same observation. Additionally, this experiment tests how \textit{OneFlowSBI} scales to image-modality inference, where strong spatial structure and local correlations differ fundamentally from the vector-valued, low-dimensional SBIBM tasks (Additional details on the simulation settings are provided in Appendix~\ref{app:task_desc_image}.)

We train \textit{OneFlowSBI} with a simulation budget of $30,000$ samples, and generate $1000$ posterior samples for a fixed parameter-observation pair. Figure~\ref{fig:mnist_main} visualizes posterior predictions for an example observation depicting a shirt, showing the first four samples along with the posterior standard deviation over the full sample set. The samples remain visually consistent with the underlying garment while differing in fine-scale details, indicating meaningful posterior uncertainty. Reconstructions preserve the dominant structure, with most variation concentrated in regions where blur and noise remove high-frequency information. The posterior standard deviation map exhibits clear spatial structure, with elevated uncertainty along edges and other high-frequency regions most affected by degradation, demonstrating that the model captures uncertainty rather than collapsing to a deterministic point estimate. Training procedures and hyperparameters, along with baseline comparisons and additional results are provided in Appendix~\ref{app:add_results_image}.
\begin{figure*}[h]
  \centering
  \includegraphics[width=\textwidth]{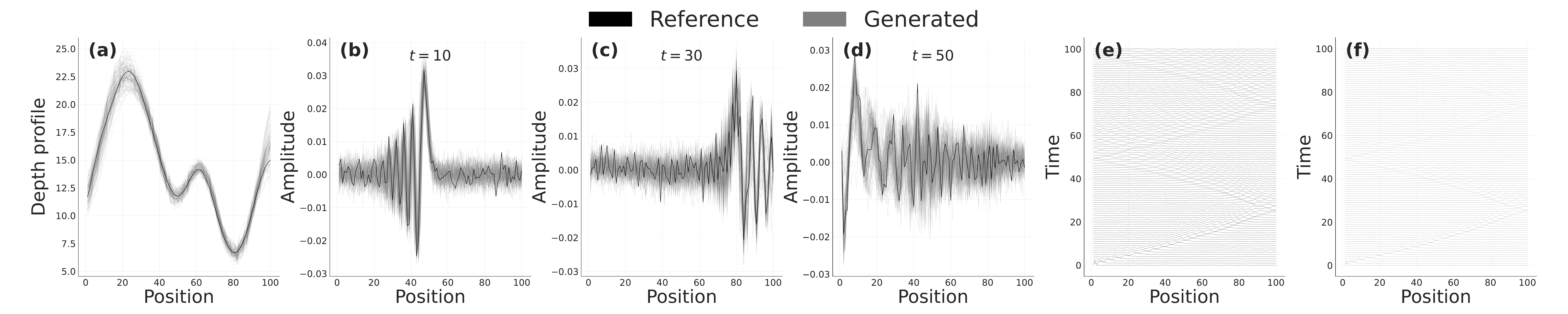}
\caption{\textbf{Posterior predictive checks for shallow water inference.}
(a) Posterior samples of the inferred depth profile compared to the ground truth.
(b–d) Wave amplitudes at selected time steps comparing observed and posterior predictive wavefields.
(e–f) Full spatiotemporal wavefields illustrating posterior predictive reconstructions over time.}
  \label{fig:sw_ppc}
\end{figure*}

\textbf{Shallow water inverse problem.}
Here we infer a $100$-dimensional parameter vector from surface-wave observations generated by numerically solving the 1-D Saint-Venant equations~\citep{Ramesh22}. Observations are Fourier-domain wavefields with real and imaginary components over time and space, forming tensors of shape $2 \times 101 \times 100$. This inverse problem is challenging because the observation space is much larger and structurally different than the parameter space. Also, local parameter perturbations induce nonlinear propagation effects that influence the wavefield globally, yielding complex and highly nonlocal parameter-observation dependencies. Further setup details are provided in Appendix~\ref{app:task_desc_image}.

We train \textit{OneFlowSBI} using $30,000$ simulations; implementation details and hyperparameters are listed in Appendix~\ref{app:hyperparam_sw}. As ground-truth posteriors are unavailable, we fix a parameter-observation pair for evaluation, drawing $100$ posterior samples, and report posterior predictive checks in Figure~\ref{fig:sw_ppc}. Panel (a) shows inferred depth-profile samples concentrating around the true depth, with increased uncertainty in regions where the wave observations are less informative. Panels (b-d) compare observed and predicted wave amplitudes at selected time steps, showing close agreement in both phase and amplitude across space. Full spatiotemporal wavefields in panels (e-f) further confirm that posterior predictive samples capture the global propagation dynamics over the entire observation window. See Appendix~\ref{app:add_results_sw} for additional results, including NPE baselines.


\section{Ablations}\label{sec:ablations}
In this ablation, we probe the practical deployment properties of \textit{OneFlowSBI}: a single reusable inference model that supports diverse query types across modalities and remains robust under noisy or corrupted observations. We evaluate robustness under additive observation noise and partial observability, and analyze sampling efficiency by varying the number of ODE solver steps used at inference.
\begin{figure}[t]
    \centering
    \begin{minipage}{0.5\columnwidth}
        \centering
        \includegraphics[width=\linewidth]{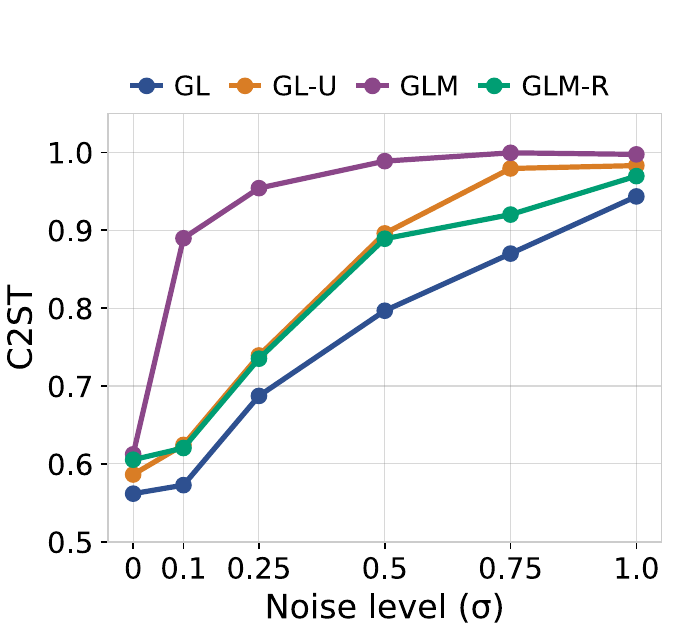}
        \vspace{-0.5em}
        {\small (a)}
    \end{minipage}\hfill
    \begin{minipage}{0.5\columnwidth}
        \centering
        \includegraphics[width=\linewidth]{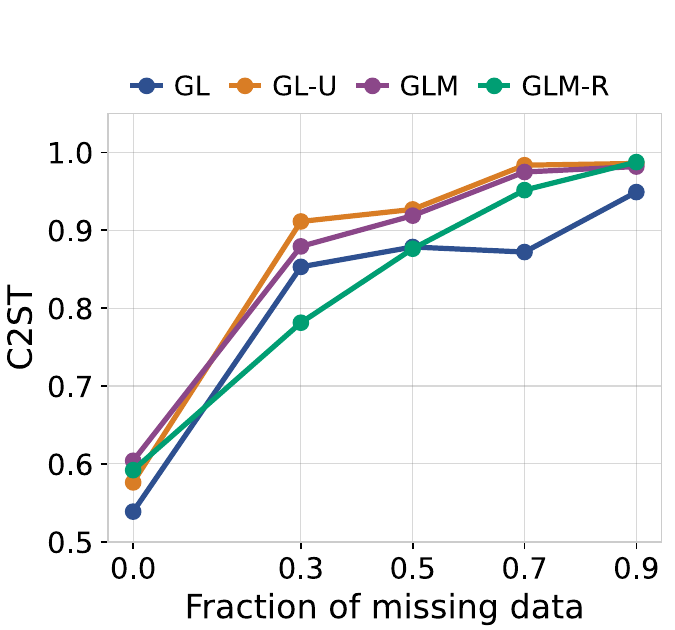}
        \vspace{-0.5em}
        {\small (b)}
    \end{minipage}
    \caption{\textbf{OneFlowSBI robustness.}
{(a) Posterior robustness to additive observation noise.
(b) Posterior robustness to missing observations.
    Both evaluated using C2ST.}}
   
    \label{fig:robustness}
    \label{fig:noise_robustness}
    \label{fig:missing_data}
\end{figure}

\textbf{Quantitative robustness to observation noise.}
Real measurements are often noisy due to sensor constraints and the acquisition pipeline, motivating an assessment of inference methods in the presence of observation noise. We evaluate posterior inference under additive Gaussian noise, $\tilde{\mathbf{y}}_{\mathrm{obs}}=\mathbf{y}_{\mathrm{obs}}+\mathcal{N}(\mathbf{0},\sigma^2\mathbf{I})$, for the Gaussian Linear, Gaussian Linear Uniform, Bernoulli GLM, and Bernoulli GLM (raw) tasks,
using models trained on $30{,}000$ simulations. We vary the noise level $\sigma$ over the range $\{0, 0.1, 0.25, 0.5, 0.75, 1\}$. Figure~\ref{fig:noise_robustness}(a) shows monotonic C2ST increase with $\sigma$, reflecting the expected degradation as signal-to-noise ratio decreases. The degradation pattern is strongly task-dependent: the Bernoulli GLM exhibits high sensitivity, while the Gaussian Linear task remains robust under moderate noise, with Gaussian Linear Uniform and Bernoulli GLM (raw) showing intermediate behavior. Importantly, performance remains acceptable for practical noise levels ($\sigma \le 0.5$) across most tasks, with severe degradation only occurring at high noise levels ($\sigma \ge 0.75$) where observations become insufficiently informative. This ablation demonstrates that \textit{OneFlowSBI} maintains reliable posterior inference under realistic measurement noise and degrades predictably only when noise severely dominates the signal.

\textbf{Quantitative robustness to missing observations.}
In real-world experiments, observations are often incomplete due to sensor failures, occlusions, selective acquisition, or corruption, motivating an evaluation of robustness under partial observability. Using the setup as above, we perform posterior inference when only a subset of observation coordinates are available. At test time, we randomly mask a fraction $\rho\in\{0.0,0.3,0.5,0.7,0.9\}$ of the observation dimensions (missingness rate) and sample $p(\bm{\theta}\mid \mathbf{y}_{\mathrm{obs}})$ using the posterior mask. For each $\rho$, we draw $10{,}000$ samples and compute C2ST against reference posteriors. Figure~\ref{fig:missing_data}(b) shows monotonic C2ST increase with $\rho$, reflecting progressive information loss. Performance remains stable for low/moderate missingness ($\rho \le 0.5$) across tasks, demonstrating effective handling of partial observations. Sharp degradation occurs only at high missingness levels ($\rho \ge 0.7$), where C2ST approaches $1$ as expected when observations become insufficiently informative for posterior identification. This ablation confirms that \textit{OneFlowSBI} robustly handles realistic missing data scenarios, with performance degrading only when the majority of observations are unavailable.

\begin{figure}[h]
    \centering
    \includegraphics[width=0.55\columnwidth]{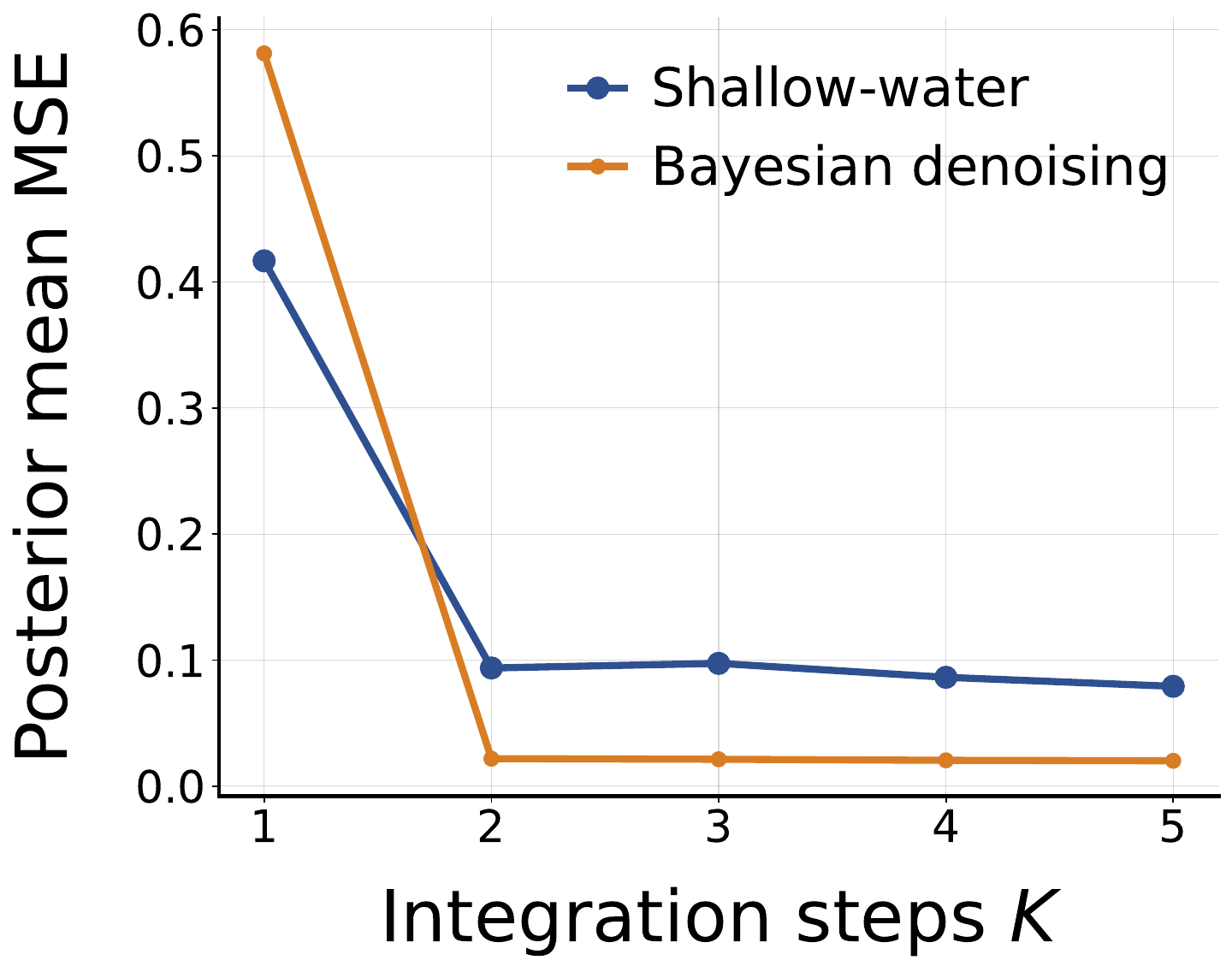}
    \caption{
    Mean squared error (MSE) of the posterior mean with respect to the ground-truth  as a function of the number of ODE discretization steps at inference.}
    \label{fig:sample_ablation}
\end{figure}

\textbf{Sampling efficiency.}
Since \textit{OneFlowSBI} targets repeated and diverse inference queries with a single model, fast sampling is essential for practical deployment. We study how ODE discretization steps affect posterior accuracy on two high-dimensional problems (Shallow Water and Bayesian Denoising). For each task, we draw 100 posterior samples using an Euler solver and compute the MSE between the posterior mean and ground truth for step counts from 1 to 5. Figure~\ref{fig:sample_ablation} shows that only 2--3 steps are sufficient for near-optimal performance: error drops sharply in this range and saturates thereafter with no systematic gains beyond 3 steps. This sampling efficiency stems from the geometry induced by flow matching with masked linear interpolants. The conditional velocity field in Eq.~\eqref{eq:masked-interpolant} is directionally constant on unobserved coordinates and varies smoothly only in magnitude, yielding near-straight transport paths between the noise distribution and the posterior. Consequently, accurate ODE integration succeeds even under coarse discretization. In contrast, score-based diffusion methods like Simformer require iterative stochastic denoising over many steps to follow curved drift--diffusion trajectories, incurring substantially higher per-query cost \citep{gloecklerall}. This efficiency makes \textit{OneFlowSBI} well-suited for rapid multi-query inference.



\section{Conclusion}\label{sec:conclusion}

We presented \textit{OneFlowSBI}, a unified framework for simulation-based inference that approximates the joint distribution of parameters and observations through a single flow matching model. Combining dynamic coordinate masking with a stable regression objective, the method supports a full spectrum of inference queries - from standard posterior estimation to likelihood surrogates, without the need for task-specific retraining. Empirical evaluations demonstrate that \textit{OneFlowSBI} outperforms or matches state-of-the-art specialized and generalized solvers across a majority of benchmarks, while exhibiting robustness to partial or noisy data. Further, exploiting the straight-line geometry of optimal transport, the framework ensures efficient deterministic sampling independent of the underlying neural architecture. \textit{OneFlowSBI} thus offers a flexible framework for amortized SBI supporting unified multi-query inference with minimal overhead across dimensionalities and modalities.


\section*{Impact Statement}
This paper introduces a unified simulation based inference method that supports multiple inference queries across scientific and engineering applications. As with any SBI approach, validity depends on the simulator and prior: misspecification can lead to biased posteriors and overconfident conclusions. In deployment, users should check modeling assumptions, perform posterior predictive diagnostics, and interpret uncertainty in line with domain knowledge and measurement noise. We do not anticipate broader societal impacts beyond the general risks associated with probabilistic machine learning models.

\bibliography{references}
\bibliographystyle{icml2026}

\newpage
\appendix
\onecolumn

\section{Conditional Probability-Flow Validity Under Masking}\label{app:proof}

\textit{OneFlowSBI} implements conditioning by treating a masked subset of coordinates as observed (fixed) while learning a probability flow over the remaining unobserved coordinates. This section formalizes why these masked dynamics yield a valid, mass-conserving probability flow that is consistent with the time-evolving conditional density.

Let $\mathbf{z} = (\boldsymbol{\theta}, \mathbf{y}) \in \mathbb{R}^d$ denote the joint vector of parameters and observations with density $p(\mathbf{z})$. A binary mask partitions $\mathbf{z}$ into two disjoint components,
\begin{equation}
\mathbf{z} = (\mathbf{z}_A, \mathbf{z}_B) \in \mathbb{R}^{d_A} \times \mathbb{R}^{d_B},
\qquad
d = d_A + d_B,
\label{eq:mask-partition}
\end{equation}
where $\mathbf{z}_A$ denotes the unobserved coordinates to be generated and $\mathbf{z}_B$ denotes the observed coordinates used for conditioning.

At $t=0$, we initialize the unobserved block by sampling $\mathbf{z}_A \sim p_0(\mathbf{z}_A)$ from a tractable reference distribution, while fixing $\mathbf{z}_B$ to observed values drawn from the data marginal $p(\mathbf{z}_B)$. The initialization thus factorizes as $p_0(\mathbf{z}_A, \mathbf{z}_B) = p_0(\mathbf{z}_A)p(\mathbf{z}_B)$, under the assumption that the initial noise $\mathbf{z}_A$ is sampled independently of the observed constraints $\mathbf{z}_B$ before the generative process introduces their coupling.

We define a time-indexed conditional probability path,
\begin{equation}
p_t(\mathbf{z}_A,\mathbf{z}_B)=p_t(\mathbf{z}_A\mid \mathbf{z}_B)\,p(\mathbf{z}_B),
\qquad t\in[0,1],
\label{eq:joint-factor}
\end{equation}
with boundary conditions $p_0(\mathbf{z}_A\mid \mathbf{z}_B) = p_0(\mathbf{z}_A)$ and $p_1(\mathbf{z}_A\mid \mathbf{z}_B) = p(\mathbf{z}_A\mid \mathbf{z}_B)$.

The masked vector field evolves only the unobserved coordinates while keeping the observed coordinates fixed,
\begin{equation}
\frac{d}{dt}\mathbf{z}_t=\mathbf{v}_t(\mathbf{z}_t),
\quad
\mathbf{z}_t=(\mathbf{z}_{t,A},\mathbf{z}_{t,B}),
\quad
\mathbf{v}_t(\mathbf{z}_A,\mathbf{z}_B)=
\begin{pmatrix}
\mathbf{v}_{t,A}(\mathbf{z}_A,\mathbf{z}_B)\\
\mathbf{0}_{d_B}
\end{pmatrix},
\label{eq:masked-vf}
\end{equation}
where $\mathbf{0}_{d_B} \in \mathbb{R}^{d_B}$ denotes the zero vector.
\begin{theorem}[Conditional continuity equation]\label{thm:conditional-ce}
Suppose the joint density $p_t(\mathbf{z}_A,\mathbf{z}_B)$ evolves under the masked vector field~\eqref{eq:masked-vf} and satisfies the joint continuity equation \citep{VillaniOT2009},
\begin{equation}
\frac{\partial p_t(\mathbf{z}_A,\mathbf{z}_B)}{\partial t}
+
\nabla_{\mathbf{z}}\cdot\!\left(
p_t(\mathbf{z}_A,\mathbf{z}_B)\,\mathbf{v}_t(\mathbf{z}_A,\mathbf{z}_B)
\right)=0,
\label{eq:joint-ce}
\end{equation}
then for any fixed $\mathbf{z}_B$ in the support of $p(\mathbf{z}_B)$, the conditional density $p_t(\mathbf{z}_A\mid \mathbf{z}_B)$ satisfies,
\begin{equation}
\frac{\partial p_t(\mathbf{z}_A \mid \mathbf{z}_B)}{\partial t}
+
\nabla_{\mathbf{z}_A} \cdot
\left[
p_t(\mathbf{z}_A \mid \mathbf{z}_B)\,
\mathbf{v}_{t,A}(\mathbf{z}_A, \mathbf{z}_B)
\right]
=0.
\label{eq:cond-ce}
\end{equation}
\end{theorem}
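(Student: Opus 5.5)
The plan is to substitute the factorization \eqref{eq:joint-factor} into the joint continuity equation \eqref{eq:joint-ce}. We then exploit the block structure of the masked field \eqref{eq:masked-vf} so that the divergence reduces to a divergence in $\mathbf{z}_A$ alone. Finally we divide by $p(\mathbf{z}_B)$, which is positive on its support.

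\textbf{Step 1: the divergence.} Write $\nabla_{\mathbf{z}}\cdot(\cdot)=\nabla_{\mathbf{z}_A}\cdot(\cdot)_A+\nabla_{\mathbf{z}_B}\cdot(\cdot)_B$. The $B$-block of $p_t\mathbf{v}_t$ is $p_t\,\mathbf{0}_{d_B}=\mathbf{0}$, so its $\mathbf{z}_B$-divergence vanishes identically. This gives
\begin{equation*}
\nabla_{\mathbf{z}}\cdot(p_t\mathbf{v}_t)=\nabla_{\mathbf{z}_A}\cdot\big(p_t(\mathbf{z}_A,\mathbf{z}_B)\,\mathbf{v}_{t,A}(\mathbf{z}_A,\mathbf{z}_B)\big).
\end{equation*}

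\textbf{Step 2: the time derivative.} First we check that the $\mathbf{z}_B$-marginal is preserved in time. Integrate \eqref{eq:joint-ce} over $\mathbf{z}_A$ for fixed $\mathbf{z}_B$, which uses the result of Step 1. The flux term integrates to zero by the divergence theorem, assuming $p_t\mathbf{v}_{t,A}$ decays at infinity (or, more carefully, testing against compactly supported functions of $\mathbf{z}_A$). Hence $\partial_t\int p_t\,d\mathbf{z}_A=0$, so the $\mathbf{z}_B$-marginal equals its $t=0$ value $p(\mathbf{z}_B)$. This is consistent with \eqref{eq:joint-factor}, and it justifies writing $\partial_t p_t(\mathbf{z}_A,\mathbf{z}_B)=p(\mathbf{z}_B)\,\partial_t p_t(\mathbf{z}_A\mid\mathbf{z}_B)$.

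\textbf{Step 3: divide out the marginal.} Since $\mathbf{z}_B$ is held fixed under $\nabla_{\mathbf{z}_A}$, the factor $p(\mathbf{z}_B)$ pulls out of the divergence as well. Equation \eqref{eq:joint-ce} therefore becomes
\begin{equation*}
p(\mathbf{z}_B)\Big[\partial_t p_t(\mathbf{z}_A\mid\mathbf{z}_B)+\nabla_{\mathbf{z}_A}\cdot\big(p_t(\mathbf{z}_A\mid\mathbf{z}_B)\mathbf{v}_{t,A}\big)\Big]=0.
\end{equation*}
For $\mathbf{z}_B$ in the support we have $p(\mathbf{z}_B)>0$, and dividing yields \eqref{eq:cond-ce}.

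\textbf{Main obstacle.} The main difficulty is rigor rather than algebra. The pointwise manipulation requires enough regularity of $p_t$ and $\mathbf{v}_{t,A}$, for instance $C^1$ with integrable flux. It also requires care about \emph{for every} versus \emph{almost every} $\mathbf{z}_B$. I would first state smoothness and positivity assumptions and argue pointwise. As a fallback, I would interpret both equations in the weak (distributional) sense. In that version one tests \eqref{eq:joint-ce} against $\varphi(\mathbf{z}_A)\psi(\mathbf{z}_B)$ and uses arbitrariness of $\psi$ to localize in $\mathbf{z}_B$. This yields the weak conditional equation for $p(\mathbf{z}_B)$-a.e. $\mathbf{z}_B$, and continuity then upgrades it to every $\mathbf{z}_B$ in the support.
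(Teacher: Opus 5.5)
Your proposal is correct and follows essentially the same route as the paper: drop the $\mathbf{z}_B$-divergence using $\mathbf{v}_{t,B}=\mathbf{0}$, substitute the factorization \eqref{eq:joint-factor}, pull $p(\mathbf{z}_B)$ out of both the time derivative and the $\mathbf{z}_A$-divergence, and divide by $p(\mathbf{z}_B)>0$. Your Step 2 adds something the paper does not do. You derive the time-invariance of the $\mathbf{z}_B$-marginal from the continuity equation, whereas the paper simply asserts that $p(\mathbf{z}_B)$ is constant in $t$ because of how the path is defined.
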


\begin{proof}
Expanding the divergence in~\eqref{eq:joint-ce} using $\mathbf{v}_{t,B}=\mathbf{0}$ yields,
\begin{equation}
\frac{\partial p_t(\mathbf{z}_A,\mathbf{z}_B)}{\partial t}
+
\nabla_{\mathbf{z}_A}\cdot\!\left(
p_t(\mathbf{z}_A,\mathbf{z}_B)\,\mathbf{v}_{t,A}(\mathbf{z}_A,\mathbf{z}_B)
\right)=0.
\label{eq:joint-ce-reduced}
\end{equation}
Substituting the factorization~\eqref{eq:joint-factor} gives,
\begin{equation}
p(\mathbf{z}_B)\,
\frac{\partial p_t(\mathbf{z}_A\mid \mathbf{z}_B)}{\partial t}
+
\nabla_{\mathbf{z}_A}\cdot\!\left(
p_t(\mathbf{z}_A\mid \mathbf{z}_B)\,p(\mathbf{z}_B)\,\mathbf{v}_{t,A}(\mathbf{z}_A,\mathbf{z}_B)
\right)=0.
\label{eq:substitute}
\end{equation}
Since $p(\mathbf{z}_B)$ is constant with respect to $\mathbf{z}_A$ and $t$, it factors out of the divergence,
\begin{equation}
p(\mathbf{z}_B)\left[
\frac{\partial p_t(\mathbf{z}_A\mid \mathbf{z}_B)}{\partial t}
+
\nabla_{\mathbf{z}_A}\cdot\!\left(
p_t(\mathbf{z}_A\mid \mathbf{z}_B)\,\mathbf{v}_{t,A}(\mathbf{z}_A,\mathbf{z}_B)
\right)\right]=0.
\label{eq:factor-out}
\end{equation}
For $\mathbf{z}_B$ in the support of $p(\mathbf{z}_B)$, we have $p(\mathbf{z}_B) > 0$, which implies~\eqref{eq:cond-ce}.
\end{proof}

This establishes that the masked ODE defines a \emph{mass-conserving conditional flow}: for each fixed $\mathbf{z}_B$, the vector field $\mathbf{v}_{t,A}$ induces a valid probability evolution of $p_t(\mathbf{z}_A\mid \mathbf{z}_B)$. Consequently, one can train $\mathbf{v}_{t,A}$ via flow matching conditioned on $\mathbf{z}_B$, and integrating the masked ODE yields samples from $p(\mathbf{z}_A\mid \mathbf{z}_B)$ at $t=1$ while keeping $\mathbf{z}_B$ fixed.

\newpage

\section{Task Description}\label{app:task_desc}
In this section, we describe the priors defining the parameter inference search space, and simulator settings used to generate the datasets for all benchmark problems reported in the main text.

\subsection{SBIBM Tasks}\label{app:task_desc_sbibm}
The Simulation-Based Inference Benchmark (SBIBM) suite comprises ten canonical problems designed to evaluate inference methods across a range of posterior structures and simulator behaviors encountered in practice. The tasks span linear and nonlinear models, stochastic dynamics, and multimodal posteriors, including Gaussian linear models, SLCP variants, Bernoulli GLMs, Gaussian mixtures, two-moons, SIR, and the Lotka-Volterra predator-prey model. We adopt the standard priors, simulators, and protocols; full specifications are in \citet{lueckmann2021benchmarking} (supplementary Tasks T.1--T.10).

\subsection{Shallow Water Problem}\label{app:task_desc_sw}
For this task, the goal is to infer an unknown basin depth profile, represented as a 100-dimensional vector drawn from a Gaussian prior,
\begin{equation}
p(\boldsymbol{\theta})
=
\mathcal{N}\!\left(\mu\,\mathbf{1}_{100},\,\boldsymbol{\Sigma}\right),
\qquad
\Sigma_{ij}
=
\sigma^2 \exp\!\left(-\frac{(i-j)^2}{\tau}\right),
\label{eq:sw_prior}
\end{equation}
where $\mu=10$, $\sigma=15$, and $\tau=100$. This squared-exponential kernel induces smooth depth profiles while retaining sufficient variability for identifiability: the chosen hyperparameters yield distinct simulator responses, whereas reduced variance or excessive smoothness lead to near-degenerate observations \citep{Ramesh22}.

The simulation model is defined as,
\begin{equation}
\mathbf{y}
=
f(\boldsymbol{\theta})
+
\sigma_\varepsilon\,\boldsymbol{\varepsilon},
\qquad
\boldsymbol{\varepsilon}_{ij} \sim \mathcal{N}(0,1),
\label{eq:sw_likelihood}
\end{equation}
where $\sigma_\varepsilon = 0.25$ and $f(\boldsymbol{\theta})\in\mathbb{R}^{2\times100\times100}$ is the stacked real and imaginary parts of the two-dimensional Fourier transform of the surface wave field. The wave field is obtained by solving the one-dimensional Saint-Venant equations on a spatial grid with $dx=0.01$ over $[0,1]$ for duration $T=3600\,\mathrm{s}$ using semi-implicit time-stepping with $dt=300\,\mathrm{s}$. Dynamics are driven by a localized surface perturbation of amplitude $0.2$ at $x=0.02$, with bottom drag coefficient $0.001$, gravitational acceleration $g=9.81\,\mathrm{m/s}^2$, and dry boundary conditions at depth $-10$. The Fourier representation emphasizes frequency-localized features informative for depth inference.

\subsection{High-Dimensional Bayesian Denoising}\label{app:task_desc_image}
The prior $p(\boldsymbol{\theta})$ is defined implicitly over clean images $\boldsymbol{\theta}\in\mathbb{R}^{784}$ (vectorized $28\times28$ Fashion-MNIST images), i.e., samples of $\boldsymbol{\theta}$ are drawn uniformly from the dataset without assuming
a tractable analytical density. The observation model simulates a blurred, photon-limited camera,
\begin{equation}
\mathbf{y}
=
\alpha \,\mathrm{Poisson}\!\left(
\lambda\,\mathcal{B}_{\sigma}(\boldsymbol{\theta})
\right),
\label{eq:camera_model}
\end{equation}
where $\mathcal{B}_{\sigma}$ denotes convolution with an isotropic Gaussian kernel of width $\sigma=2.5$, and $\lambda=1.0$, $\alpha=0.5$ control pre- and post-Poisson intensity scaling. This severely ill-posed inverse problem eliminates high frequencies via blur and introduces signal-dependent noise. No labels, side information, or image-specific architectural priors are used \citep{Radev23, Ramesh22}.

\newpage
\section{Hyperparameters}\label{app:hyperparam}

\subsection{Benchmarks Hyperparameters}\label{app:hyperparam_sbi}
We tune \textit{OneFlowSBI} separately for each SBIBM task using Optuna~\citep{akiba2019optuna} ($50$ trials), selecting the configuration with the lowest validation C2ST. Across tasks, we optimize with Adam~\citep{adam} using $(\beta_1,\beta_2)=(0.9,0.999)$, gradient clipping at norm $1.0$, and exponential moving average (EMA) weights with decay $0.999$; EMA parameters are used for evaluation and sampling. We standardize both $\bm{\theta}$ and $\mathbf{y}$ using training-set statistics. Unless noted otherwise, we apply a linear learning-rate warmup followed by cosine decay. The task-specific choices are model capacity (hidden dimension and number of residual blocks), peak learning rate, batch size, warmup length, and time sampling (Uniform or U-shaped \citep{lee2024improving}), which are reported in Table~\ref{tab:hyperparams}. All models are trained for $100{,}000$ iterations on a single NVIDIA T4 GPU, using a $90/10$ train/validation split with early stopping (patience $20$; validation every $500$ iterations). For all conditional sampling queries we use adaptive Runge-Kutta solver (DOPRI5) via \code{torchdiffeq} \citep{torchdiffeq}.

\begin{table}[h]
\centering
\small
\setlength{\tabcolsep}{5pt}
\renewcommand{\arraystretch}{1}
\caption{\textbf{SBIBM hyperparameters:}
Task-specific architecture and training settings selected via validation for each benchmark.}
\label{tab:hyperparams}
\begin{tabular}{@{}lccccc c@{}}
\toprule
\multirow{2}{*}{\textbf{Task}}
& \multicolumn{2}{c}{\textbf{Architecture}}
& \multicolumn{3}{c}{\textbf{Optimization}}
& \textbf{Time sampling} \\
\cmidrule(lr){2-3}\cmidrule(lr){4-6}\cmidrule(l){7-7}
& \textbf{Hidden dimension} & \textbf{\# blocks}
& \textbf{Batch size} & \textbf{Peak learning rate} & \textbf{Warmup steps}
& \textbf{Schedule} \\
\midrule
Two Moons               & 200 & 4 & 2048 & $1{\times}10^{-3}$ & 2500 & Uniform \\
Gaussian Mixture        & 128 & 3 & 1024 & $5{\times}10^{-4}$ & 1500 & U-shaped \\
Gaussian Linear         & 100 & 3 & 1024 & $5{\times}10^{-4}$ & 1000 & Uniform \\
Gaussian Linear Uniform & 128 & 3 & 512  & $1{\times}10^{-3}$ & 2000 & Uniform \\
SIR                     & 256 & 4 & 2048 & $1{\times}10^{-3}$ & 2000 & Uniform \\
SLCP                    & 128 & 4 & 2048 & $5{\times}10^{-4}$ & 5000 & U-shaped \\
SLCP Distractors        & 256 & 4 & 512  & $1{\times}10^{-4}$ & 2000 & U-shaped \\
Bernoulli GLM           & 512 & 4 & 1024 & $5{\times}10^{-5}$ & 1000 & U-shaped \\
Bernoulli GLM (raw)     & 256 & 4 & 512  & $1{\times}10^{-4}$ & 1500 & Uniform \\
Lotka--Volterra         & 256 & 4 & 1024 & $5{\times}10^{-4}$ & 2000 & U-shaped \\
\bottomrule
\end{tabular}
\end{table}

\subsection{Bayesian Image Deblurring Hyperparameters}\label{app:hyperparam_image}
For Bayesian deblurring, both the prior and the corresponding observation are single-channel \(28\times 28\) images. The joint flow vector field takes the noisy joint state \(\mathbf{z}_t=(\bm{\theta},\mathbf{y})\) as input together with the corresponding observation mask, concatenated into a 4-channel image (two channels for the state and two for the mask) indicating observed (\(1\)) and unobserved (\(0\)) regions. The vector field is parameterized by a time-conditioned convolutional network consisting of four time-dependent FiLM residual convolutional blocks with 128 hidden channels throughout. Each block uses \(3\times 3\) convolutions with stride 1 and padding 1 to preserve spatial resolution, and applies FiLM modulation using a sinusoidal time embedding. A final \(3\times 3\) convolution maps the 128-channel features back to 2 output channels, predicting velocities for both \(\bm{\theta}\) and \(\mathbf{y}\). All other settings, including the masking scheme, follow those used for the SBIBM benchmarks. Training is performed on a budget of 30{,}000 simulated pairs for 100,000 iterations with batch size 1024 using Adam (learning rate \(5\times10^{-4}\)), exponential moving average with decay 0.999, gradient clipping at 1.0, and uniform time sampling.

\subsection{Shallow Water Problem Hyperparameters}\label{app:hyperparam_sw}
The prior is defined over basin depth profiles $\boldsymbol{\theta}\in\mathbb{R}^{100}$, with observations $\mathbf{y}\in\mathbb{R}^{2\times101\times100}$ representing Fourier-transformed surface wave fields (real and imaginary components). The architecture uses a dual-stream design: a $\theta$-stream processes the depth profile and mask via a three-layer MLP with 256 hidden units and SiLU activations; a $y$-stream encodes spatial observations via a CNN with 128 channels, three residual blocks (GroupNorm, kernel size 3), and adaptive average pooling. Feature vectors are concatenated and processed through four AdaLN-conditioned residual blocks with sinusoidal time embeddings. Separate velocity heads predict $\mathbf{v}_\theta$ (two-layer MLP) and $\mathbf{v}_y$ (transposed-convolution decoder); final layers are zero-initialized. The loss applies MSE only on unobserved regions with reweighting factor 200.0 for $\theta$. Training uses 100,000 iterations, batch size 256, Adam optimizer (learning rate $5{\times}10^{-4}$), gradient clipping at 1.0 and uniform time sampling.

\section{Additional Results}\label{app:add_results}

\subsection{Posterior visualizations on SBIBM benchmarks}\label{app:posterior_plots}

We provide visualizations of the estimated posterior distribution for all ten SBIBM tasks to enable direct inspection of how well
\textit{OneFlowSBI} matches the SBIBM reference posterior at the maximum simulation budget ($30{,}000$). For each task, we condition on the first SBIBM test observation and draw $10{,}000$ posterior samples from
\textit{OneFlowSBI} using the posterior mask. These figures complement the
quantitative C2ST comparisons in the main paper by providing a visual check of mode coverage,
correlation structure, and tail behaviour across heterogeneous simulators.

\begin{figure}[htbp]
    \centering
    \includegraphics[width=0.4\columnwidth]{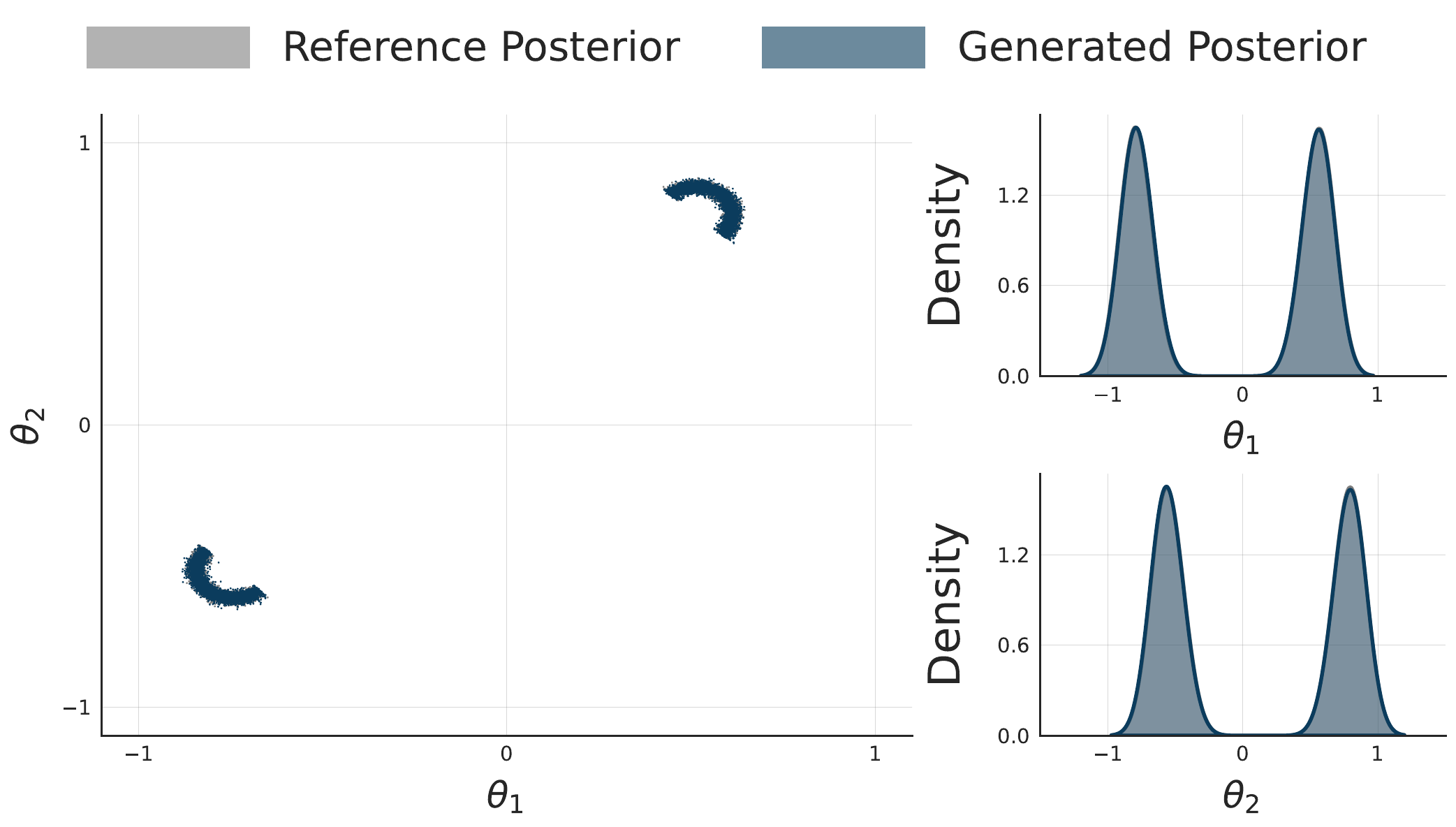}
    \caption{\textbf{Two Moons.} The generated posterior matches the reference crescents and separates the two modes; the one-dimensional marginals over $\theta_1$ and $\theta_2$ closely overlap.}
    \label{fig:posterior_two_moons}
\end{figure}

\begin{figure}[htbp]
    \centering
    \begin{minipage}[t]{0.49\columnwidth}
        \centering
        \includegraphics[width=\linewidth]{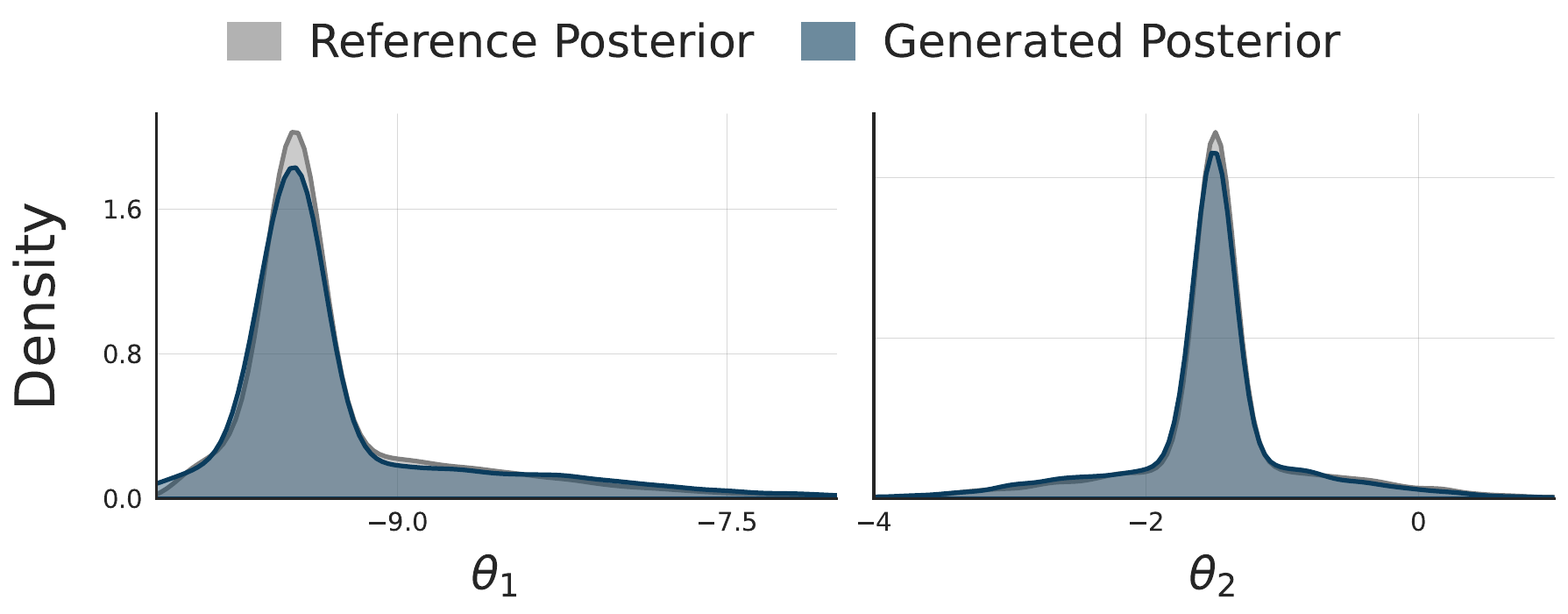}
    \caption{\textbf{Gaussian Mixture.} The generated one-dimensional marginals closely match the reference, capturing the dominant mass and tail behavior in $\theta_1$ and $\theta_2$.}
        \label{fig:posterior_gaussian_mixture}
    \end{minipage}\hfill
    \begin{minipage}[t]{0.49\columnwidth}
        \centering
        \includegraphics[width=\linewidth]{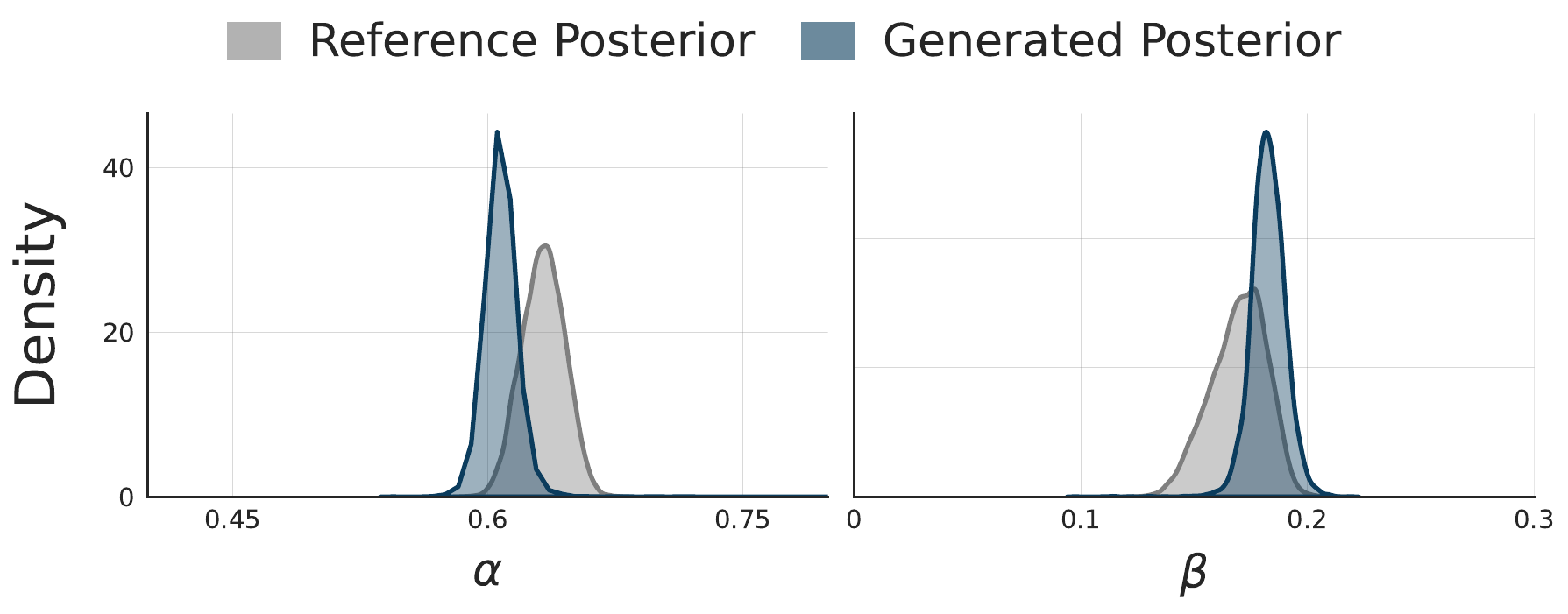}
        \caption{\textbf{SIR.} The generated marginals over $\alpha$ and $\beta$ align with the reference while showing a small shift in location, indicating mild mismatch under this observation.}
        \label{fig:posterior_sir}
    \end{minipage}
\end{figure}

\begin{figure}[htbp]
    \centering
    \includegraphics[width=0.6\columnwidth]{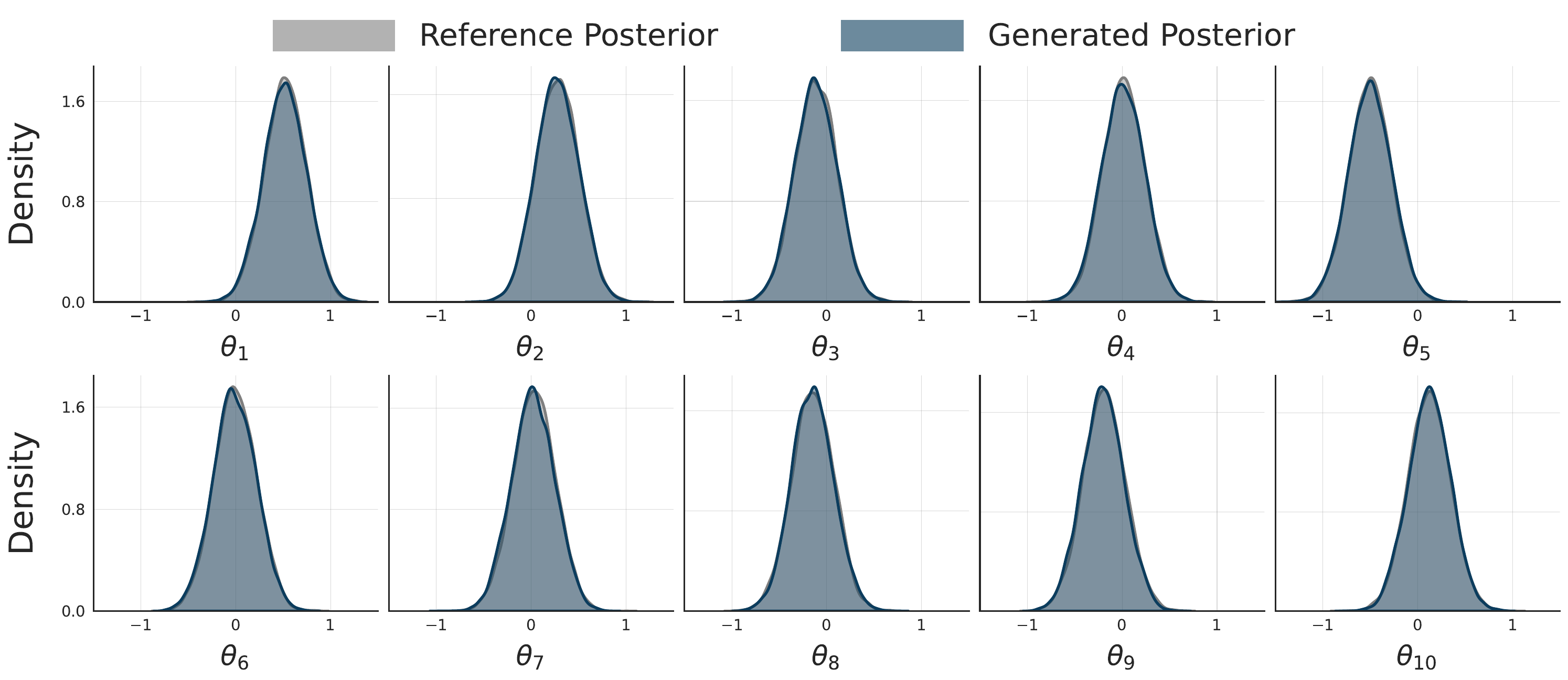}
    \caption{\textbf{Gaussian Linear.} In this conjugate setting, the generated posterior marginals closely match the SBIBM reference across all dimensions, reflecting accurate recovery of the analytic posterior structure.}
    \label{fig:posterior_gaussian_linear}
\end{figure}

\begin{figure}[htbp]
    \centering
    \includegraphics[width=0.6\columnwidth]{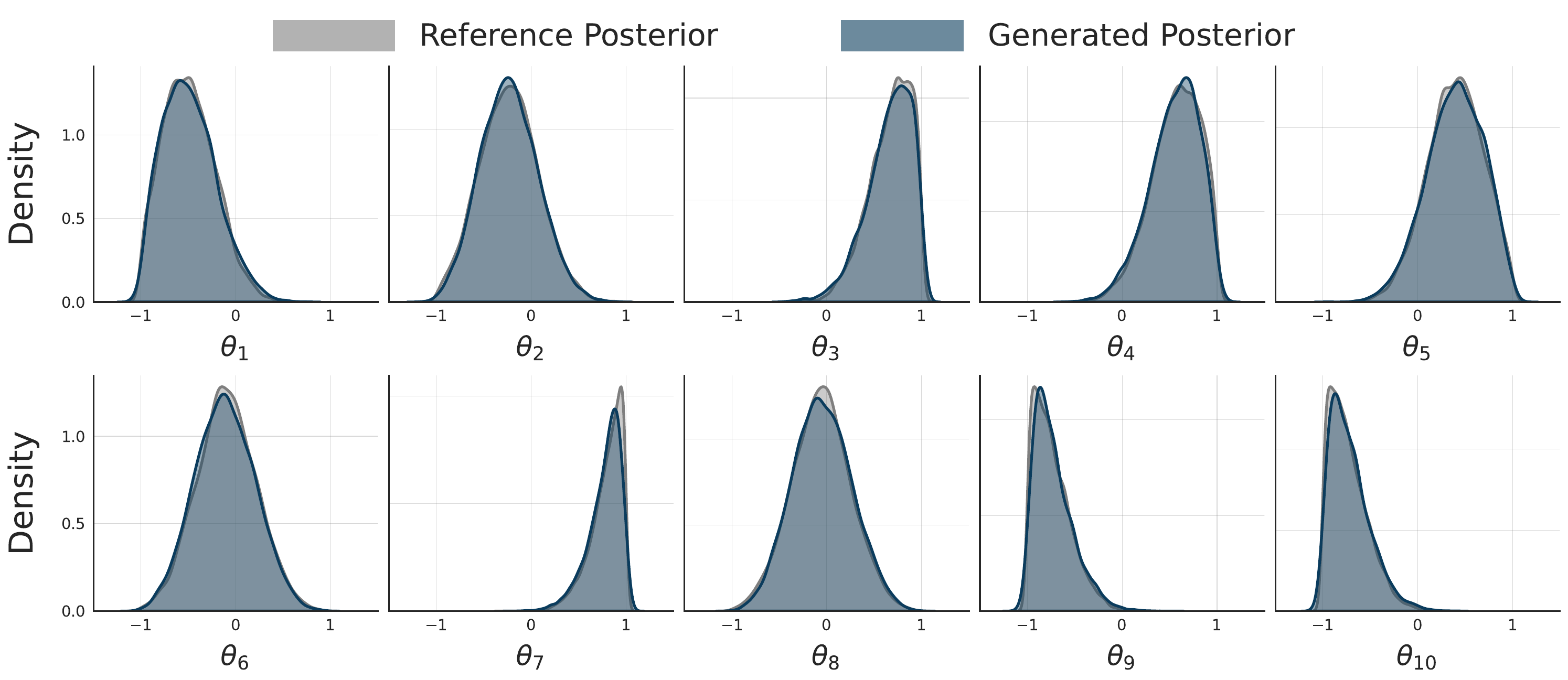}
    \caption{\textbf{Gaussian Linear Uniform.} In this non conjugate variant, the generated posterior matches the dominant marginal structure of the SBIBM reference, with mild residual deviations in a few coordinates, most visible in the tails.}
    \label{fig:posterior_gaussian_linear_uniform}
\end{figure}

\begin{figure}[htbp]
    \centering
    \includegraphics[width=0.6\columnwidth]{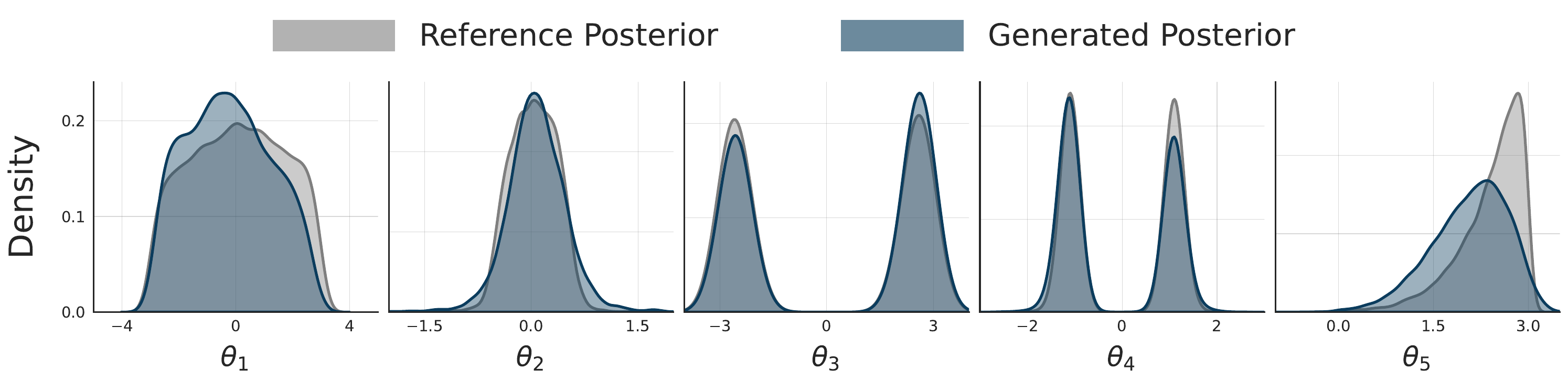}
\caption{\textit{SLCP.} Generated posterior matches the reference posterior across both unimodal and multi-peaked marginals (e.g., the bimodality in $\theta_3$ and $\theta_4$), capturing the dominant modes and relative mass; remaining mismatch is concentrated in low-density regions and tail behavior (most visibly in $\theta_5$).}
    \label{fig:posterior_slcp}
\end{figure}

\begin{figure}[htbp]
    \centering
    \includegraphics[width=0.6\columnwidth]{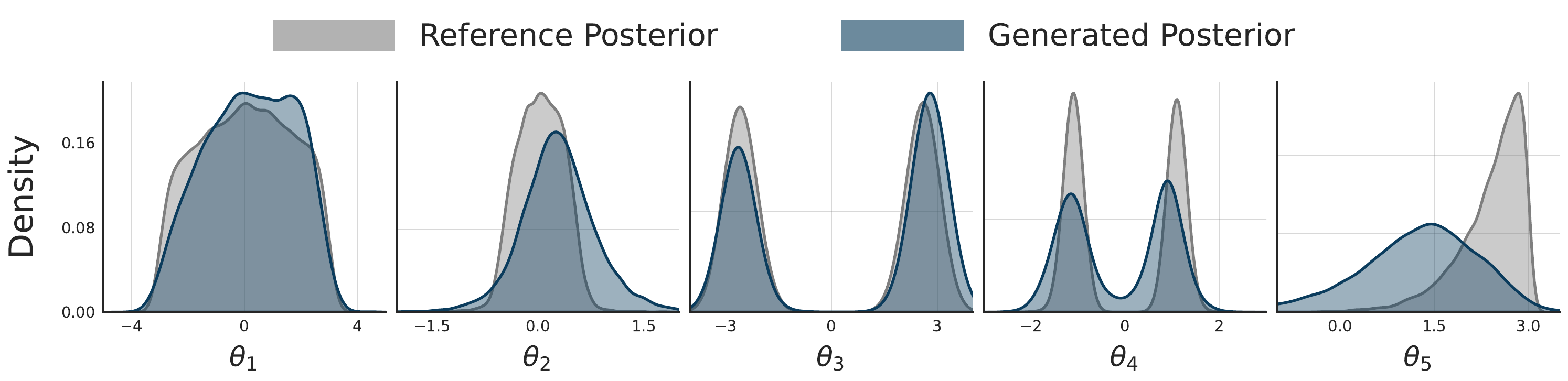}
\caption{\textbf{SLCP Distractors.} For this task, observations are heavily corrupted: out of the $100$ observation dimensions, only $8$ are informative, while the remaining $92$ are distractors drawn from a heavy-tailed mixture and randomly permuted across dimensions. Despite this, \textit{OneFlowSBI} recovers the coarse posterior geometry, including multi-modal marginals; however, because the model learns the full joint distribution, a substantial fraction of training capacity is devoted to capturing irrelevant distractor structure.}

    \label{fig:posterior_slcp_distractors}
\end{figure}

\begin{figure}[htbp]
    \centering
    \includegraphics[width=0.6\columnwidth]{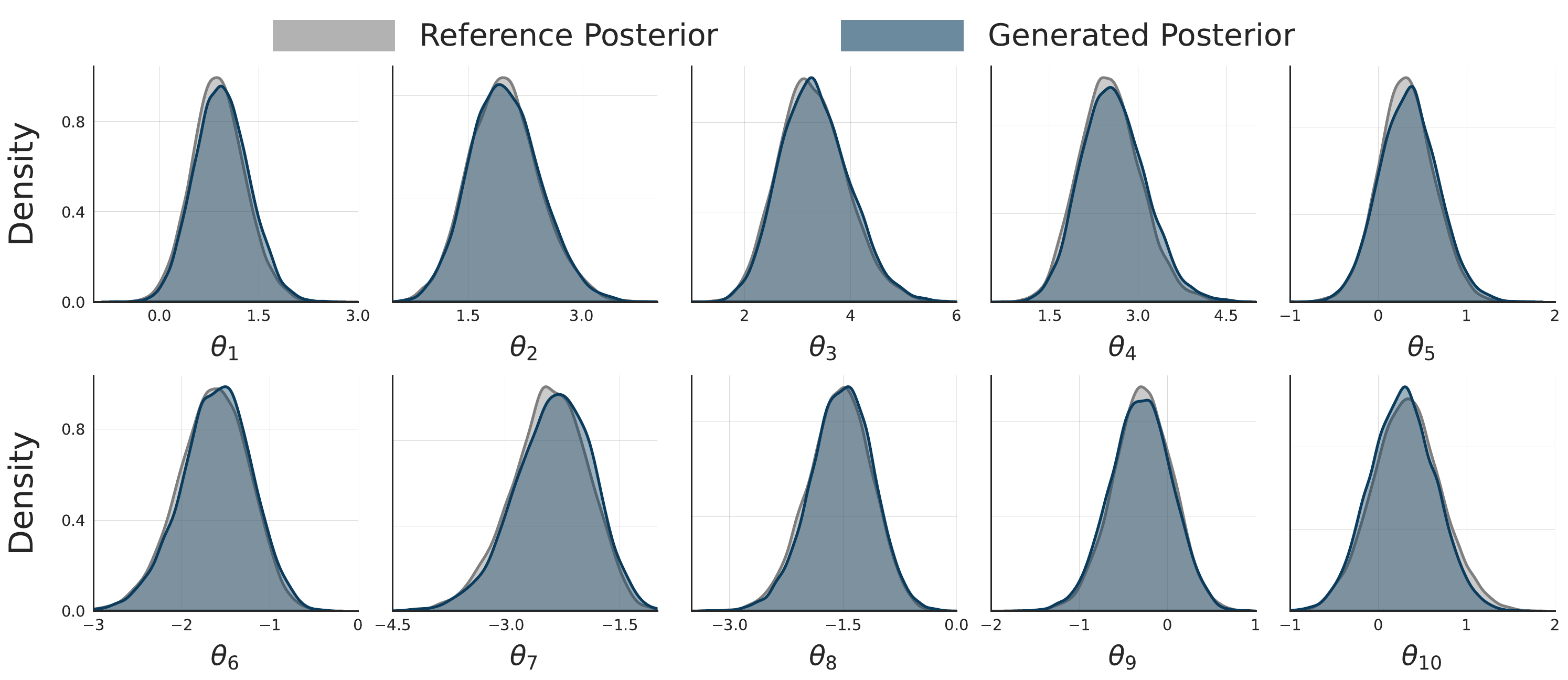}
   \caption{\textbf{Bernoulli GLM.} With low-dimensional summary statistics, \textit{OneFlowSBI} closely matches the reference posterior across marginals, preserving concentration and overall dependence structure; small deviations appear as mild over-dispersion in a few coordinates (e.g., slightly broader peaks).}
    \label{fig:posterior_bernoulli_glm}
\end{figure}

\clearpage

\begin{figure}[htbp]
    \centering
    \includegraphics[width=0.6\columnwidth]{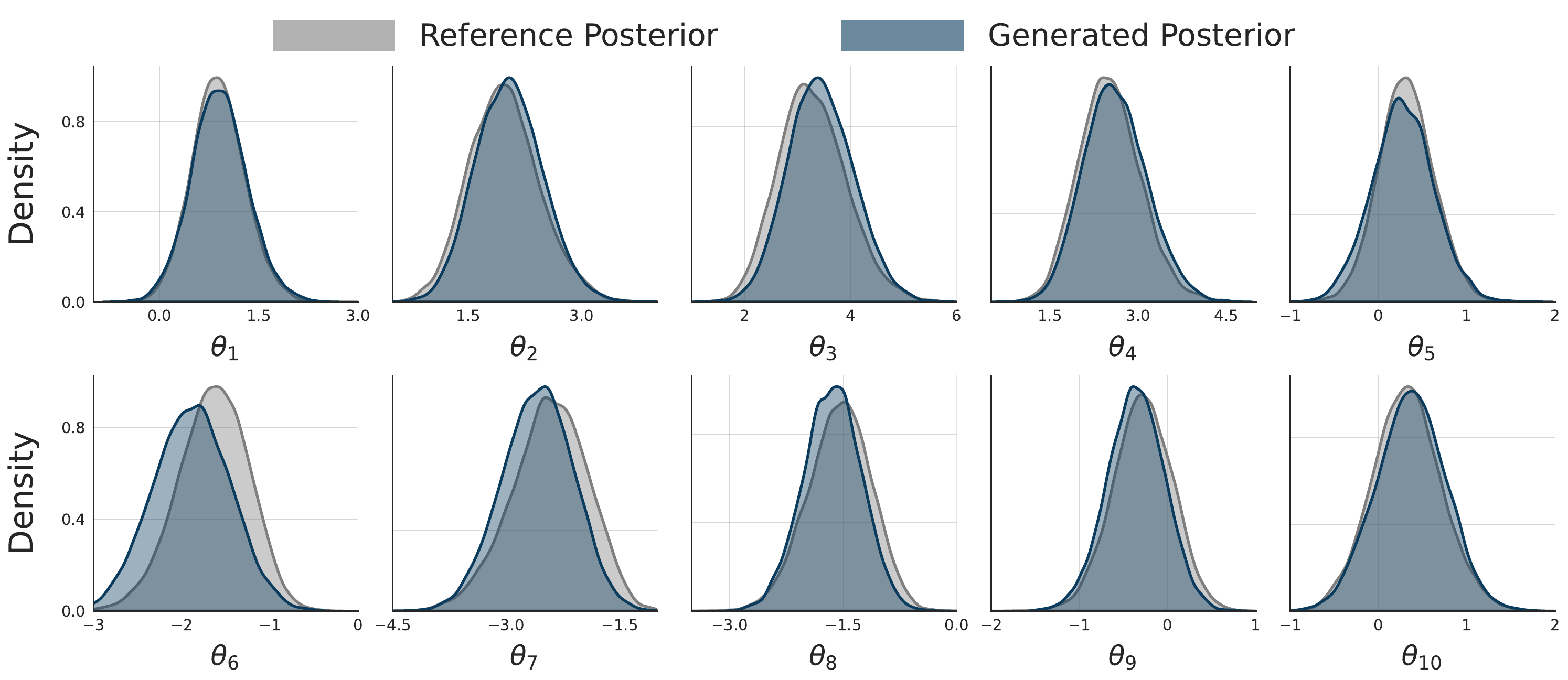}
    \caption{\textbf{Bernoulli GLM (raw).} Conditioning on raw $100$-dimensional observations makes inference markedly harder: while the main posterior mass is recovered, several marginals become visibly broader and exhibit increased tail mismatch, consistent with reduced effective signal and a more challenging observation model.}
    \label{fig:posterior_bernoulli_glm_raw}
\end{figure}

\begin{figure}[htbp]
    \centering
    \includegraphics[width=0.65\columnwidth]{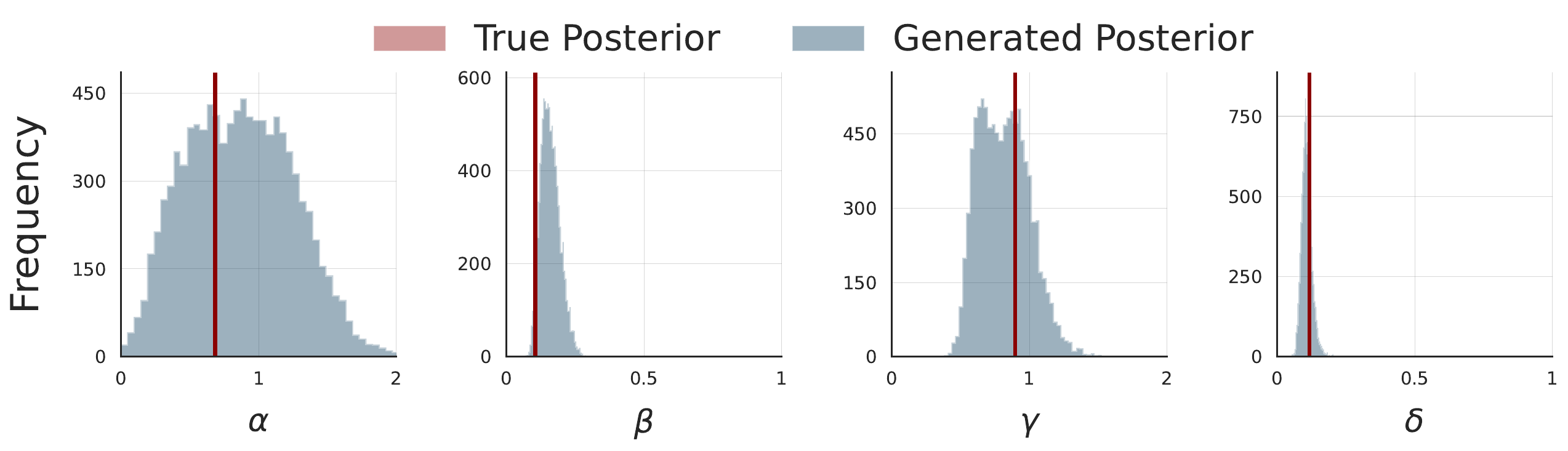}
   \caption{\textbf{Lotka-Volterra.} The reference posterior is extremely concentrated (very small variance), so even minor discrepancies in the learned posterior become visually prominent. This task is challenging due to strong parameter coupling and limited identifiability from the observation, making accurate recovery of such sharp marginals difficult.}
    \label{fig:posterior_lotka_volterra}
\end{figure}

\clearpage
\subsection{Bayesian Image Deblurring: Additional Results}\label{app:add_results_image}

\textbf{Posterior Structure Across Image Classes}.
The posterior structure learned by \textit{OneFlowSBI} for implicit-prior image deblurring is summarized in Figure~\ref{fig:deblur_posterior}. Across different Fashion-MNIST classes, posterior samples generated from a single blurred and noisy observation preserve class-consistent global structure while exhibiting meaningful variability. The posterior mean recovers the dominant shape of the underlying object despite severe degradation, whereas the posterior standard deviation is spatially structured and concentrates mostly around the boundaries and fine-scale details where the blur operator removes information. These results demonstrate that \textit{OneFlowSBI} captures high-dimensional uncertainty rather than collapsing to over-smoothed point estimates, consistently across diverse image classes.

\begin{figure*}[h]
  \centering
  \includegraphics[width=0.9\textwidth]{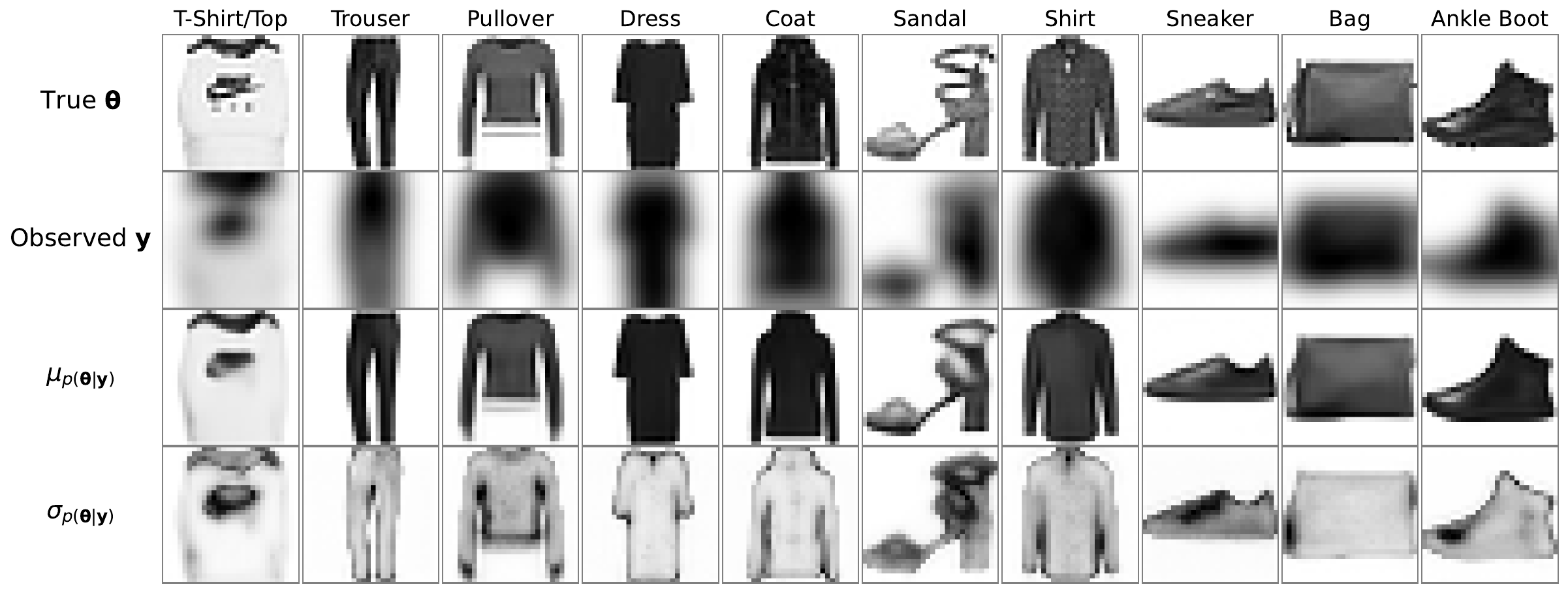}
\caption{\textbf{Bayesian image deblurring.} Posterior mean and standard deviation from 500 samples across different classes, showing accurate class-consistent reconstructions with uncertainty localized to ambiguous regions.}
\label{fig:deblur_posterior}
\end{figure*}

\textbf{Robustness to Missing Observations.}
We evaluate robustness to partial observations by randomly masking a $10{\times}10$ patch in each input and performing posterior inference conditioned on the remaining pixels. Figure~\ref{fig:missing_info_mnist} shows that the posterior mean captures coarse, semantically meaningful structure, even though fine details are not fully recovered. The posterior uncertainty is concentrated in the masked regions and their vicinity, indicating increased epistemic uncertainty where information is missing. Overall, these results suggest that \textit{OneFlowSBI} can accommodate missing observations and produce reasonable, uncertainty-aware posterior estimates by leveraging global structure in the learned joint distribution.


\begin{figure*}[h]
    \centering
    \includegraphics[width=0.5\linewidth]{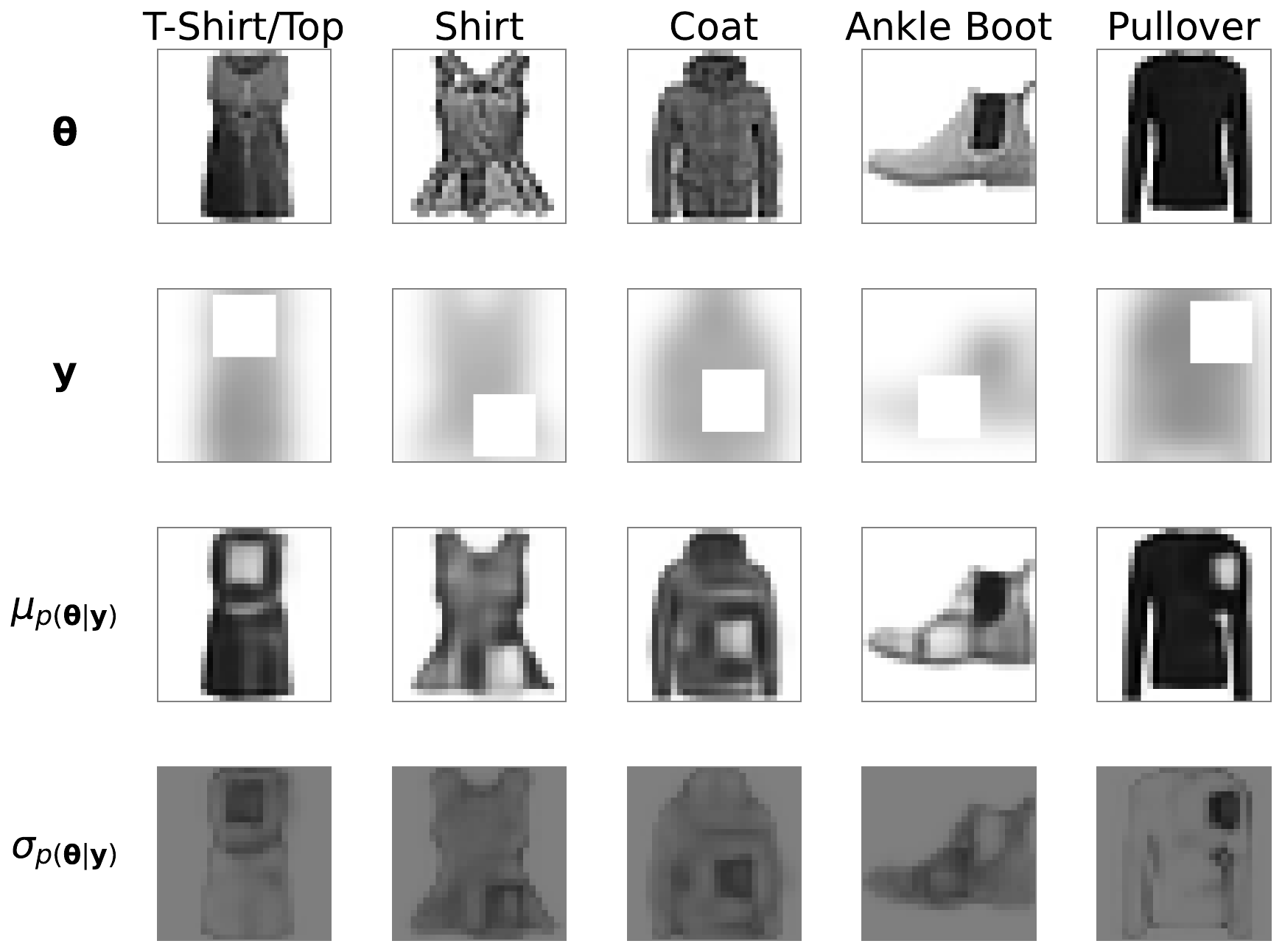}
    \caption{\textbf{Posterior inference with missing data.}
Posterior mean $\mu_p(\boldsymbol{\theta}\mid\mathbf{y})$ and standard deviation $\sigma_p(\boldsymbol{\theta}\mid\mathbf{y})$ estimated from 1000 posterior samples under random masking. While the full structure is not completely recovered, the model produces plausible reconstructions and correctly localizes uncertainty to the masked regions.}
    \label{fig:missing_info_mnist}
\end{figure*}

\textbf{NPE Baseline.}
For comparison, we report results from Neural Posterior Estimation (NPE) \citep{papamakarios2016fast}, where the posterior is parameterized by a neural spline flow with 100 hidden features. Unlike \textit{OneFlowSBI}, NPE requires an explicit low-dimensional summary of the observation; here, the noisy image is compressed using a convolutional summary network consisting of four $3\times3$ Conv2D layers with channel sizes $(64,64,128,128)$, followed by ReLU activations, batch normalization, global average pooling, and a linear projection to a 128-dimensional embedding. The model is trained using the same simulation budget as \textit{OneFlowSBI} with batch size 128 and converges after approximately 155 epochs. Figure~\ref{fig:npe-denoising-demo} shows posterior samples obtained from NPE for a held-out observation. Compared to \textit{OneFlowSBI}, the resulting posterior exhibits weaker spatial coherence and higher uncertainty, particularly around edges and fine-scale structures, consistent with information loss introduced by the learned low-dimensional summary.

\begin{figure}[h]
    \centering
    \includegraphics[width=\linewidth]{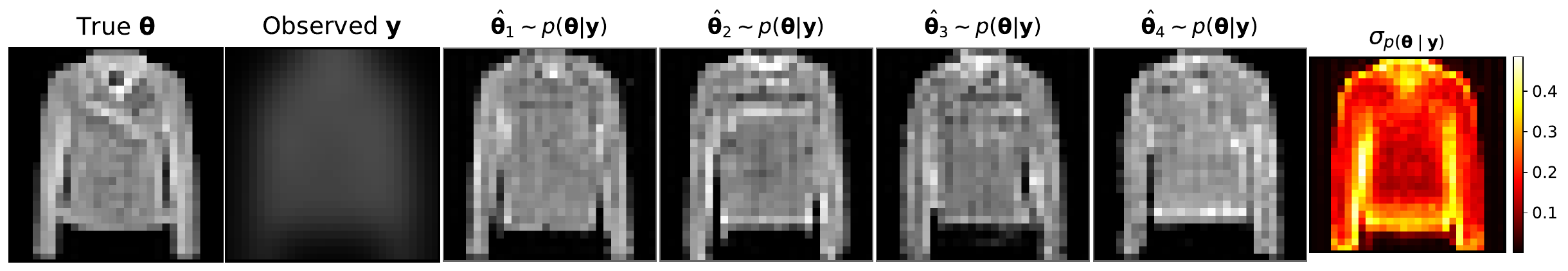}
    \caption{\textbf{Posterior inference for Bayesian image deblurring (NPE).} 
Posterior samples from NPE exhibit higher uncertainty and reduced spatial coherence compared to \textit{OneFlowSBI}.}
    \label{fig:npe-denoising-demo}
\end{figure}

\subsection{Shallow Water Problem: Additional Results}\label{app:add_results_sw}

\textbf{Robustness to Missing Frequency Information.}
We evaluate robustness to incomplete observations by conditioning on progressively masked frequency-domain wavefields and examining the induced posterior over basin depth profiles. Figure~\ref{fig:sw_missingness} shows posterior samples for a test instance as the fraction of missing coefficients increases from 0\% to 90\%. Under mild to moderate missingness (up to 50\%), the posterior remains concentrated around the true depth profile, indicating effective information propagation from observed frequencies. As missingness increases (70--90\%), uncertainty grows substantially and becomes spatially structured, reflecting genuine non-identifiability rather than numerical artifacts: the posterior mean remains smooth and physically plausible while variability increases in weakly constrained regions. \textit{OneFlowSBI} degrades gracefully under severe information loss by expressing uncertainty through the posterior rather than collapsing to spurious point estimates.

\begin{figure}[h]
    \centering
    \includegraphics[width=0.75\columnwidth]{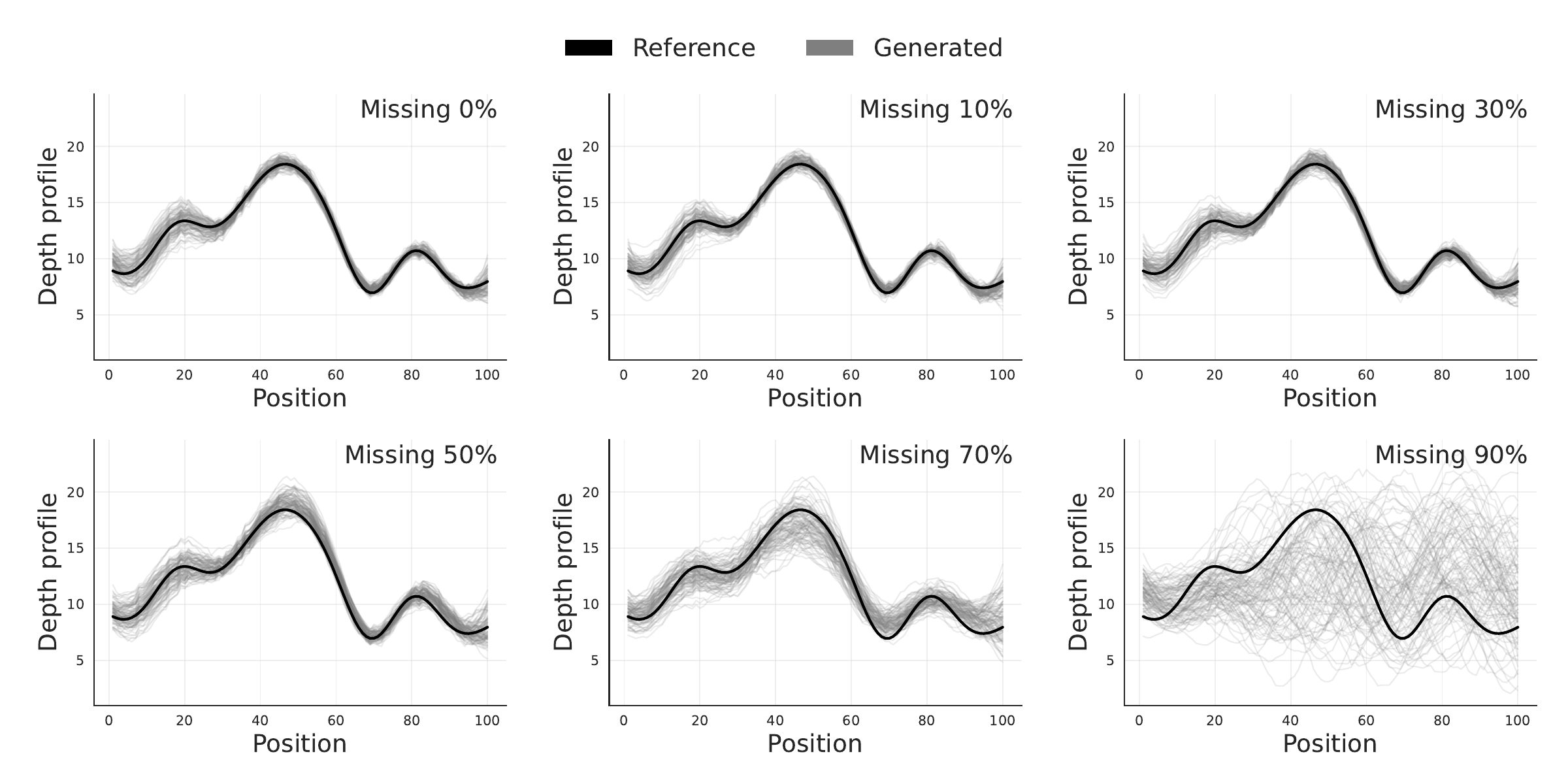}
\caption{\textbf{Posterior inference under observation missingness (shallow-water model).}
Posterior depth samples (gray) and the ground-truth profile (black) for a representative test case, shown for increasing levels of observation missingness from 0\% to 90\%.}
\label{fig:sw_missingness}
\end{figure}

\textbf{NPE Baseline.}
For comparison, we train a specialized posterior estimator using Neural Posterior Estimation (NPE) \citep{papamakarios2016fast} with a neural spline flow (NSF) containing 128 hidden features. Since observations are high-dimensional ($2{\times}101{\times}100$), the flow is conditioned on a learned embedding rather than raw fields. The embedding network is a spectral-normalized convolutional encoder with five Conv2D layers (kernel size 4): $2{\to}256$ (stride 1, padding 0), $256{\to}128$ (stride 2, padding 1), $128{\to}64$ (stride 2, padding 1), $64{\to}64$ (stride 2, padding 1), and $64{\to}1$ (stride 2, padding 1). The first four layers use BatchNorm and ReLU; the final layer uses Tanh activation. The resulting feature map is pooled to $6{\times}6$, flattened, and projected via a spectral-normalized linear layer ($36{\to}128$) to obtain the conditioning embedding. Training uses batch size 128; evaluation draws 1,000 posterior samples. Figure~\ref{fig:npe-sw-ppc} shows NPE produces more diffuse depth posteriors that deviate from the reference solution, yielding posterior predictive wavefields with increased variability and weaker agreement with observed amplitudes, particularly at later time steps. This highlights the difficulty of posterior-only methods in capturing the strong global parameter--observation coupling in the shallow-water system.

\begin{figure}[h]
    \centering
    \includegraphics[width=\textwidth]{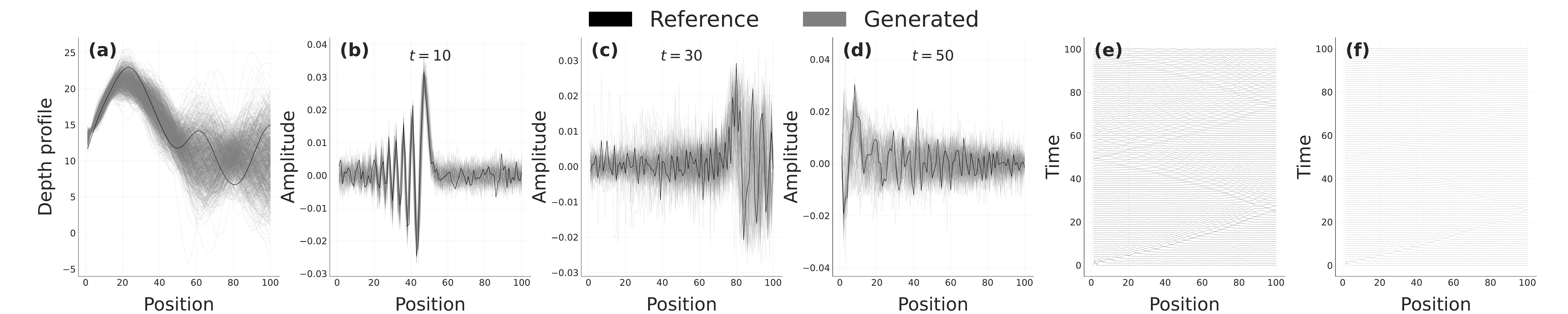}
    \caption{
    \textbf{Posterior predictive checks for shallow water using NPE.}
    (a) Posterior samples of the inferred depth profile compared to the ground truth.
    (b--d) Wave amplitude at selected time steps $t\in\{10,30,50\}$, showing posterior predictive draws.
    (e--f) Full spatiotemporal wavefield reconstructions.
    Generated samples capture the dominant wave structure but exhibit increased dispersion and phase variability compared to reference simulations.
    }
    \label{fig:npe-sw-ppc}
\end{figure}

\clearpage

\section{Quantitative posterior metrics on SBIBM}\label{app:tables}
We evaluate posterior quality on all ten SBIBM tasks using C2ST and kernel MMD, comparing \textit{OneFlowSBI} against Simformer (Dense), Simformer (Undirected), and NPE. We report $\mathrm{MMD}^2$ with a Gaussian kernel using an unbiased estimator and median heuristic for bandwidth selection. The two Simformer variants achieve nearly identical results on low-dimensional tasks (Two Moons and GMM), suggesting that masking directionality has minimal impact in low dimensional regime. For Lotka-Volterra, C2ST saturates at 1.0 for all methods due to extreme posterior concentration (marginal standard deviations $\sim 10^{-4}$). In such near-deterministic regimes, even minor discrepancies make samples trivially separable, reducing C2ST's discriminative power. We therefore interpret both metrics together, noting that $\mathrm{MMD}^2$ provides a complementary view but remains sensitive to bandwidth and scale in highly concentrated posteriors \citep{lueckmann2021benchmarking}.

\begin{table*}[!h]
\centering
\renewcommand{\arraystretch}{1.25}
\setlength{\tabcolsep}{5pt}
\caption{Posterior C2ST (mean $\pm$ std. over 5 runs and 10 observations) across simulation budgets; best in \textbf{bold}, second-best \underline{underlined}.}
\label{tab:c2st_main}

\begin{subtable}[t]{\textwidth}
\centering
\begin{adjustbox}{max width=\textwidth}
\begin{tabular}{>{\columncolor{white!10}}l|ccc|ccc|ccc}
\Xhline{1.5pt}
\rowcolor{white!10}
Method
& \multicolumn{3}{c|}{SLCP Distractors}
& \multicolumn{3}{c|}{Bernoulli GLM (raw)}
& \multicolumn{3}{c}{Lotka--Volterra} \\
\rowcolor{white!10}
& $10\text{k}$ & $20\text{k}$ & $30\text{k}$
& $10\text{k}$ & $20\text{k}$ & $30\text{k}$
& $10\text{k}$ & $20\text{k}$ & $30\text{k}$ \\
\hline
NPE
& \cTwoST{$\underline{0.937}$}{$0.036$} & \cTwoST{$0.909$}{$0.045$} & \cTwoST{$0.897$}{$0.049$}
& \cTwoST{$\mathbf{0.784}$}{$0.034$} & \cTwoST{$\mathbf{0.729}$}{$0.042$} & \cTwoST{$\underline{0.697}$}{$0.042$}
& \cTwoST{$1.000$}{$0.00$} & \cTwoST{$1.000$}{$0.00$} & \cTwoST{$1.000$}{$0.00$} \\
Simformer (Dense)
& \cTwoST{$\mathbf{0.874}$}{$0.041$} & \cTwoST{$\mathbf{0.837}$}{$0.048$} & \cTwoST{$\mathbf{0.826}$}{$0.045$}
& \cTwoST{$0.969$}{$0.027$} & \cTwoST{$0.870$}{$0.043$} & \cTwoST{$0.819$}{$0.043$}
& \cTwoST{$1.000$}{$0.000$} & \cTwoST{$1.000$}{$0.00$} & \cTwoST{$1.000$}{$0.00$} \\
Simformer (Undirected)
& \cTwoST{$0.979$}{$0.010$} & \cTwoST{$0.979$}{$0.010$} & \cTwoST{$0.979$}{$0.010$}
& \cTwoST{$0.998$}{$0.002$} & \cTwoST{$0.938$}{$0.041$} & \cTwoST{$0.871$}{$0.081$}
& \cTwoST{$1.000$}{$0.000$} & \cTwoST{$1.000$}{$0.00$} & \cTwoST{$1.000$}{$0.00$} \\
\rowcolor{gray!10}
OneFlowSBI
& \cTwoST{$0.966$}{$0.006$} & \cTwoST{$\underline{0.906}$}{$0.012$} & \cTwoST{$\underline{0.883}$}{$0.011$}
& \cTwoST{$\underline{0.921}$}{$0.050$} & \cTwoST{$\underline{0.758}$}{$0.086$} & \cTwoST{$\mathbf{0.653}$}{$0.029$}
& \cTwoST{$1.000$}{$0.000$} & \cTwoST{$1.000$}{$0.00$} & \cTwoST{$1.000$}{$0.00$} \\
\Xhline{1.5pt}
\end{tabular}
\end{adjustbox}
\end{subtable}

\vspace{0.2cm}

\begin{subtable}[t]{\textwidth}
\centering
\setlength{\tabcolsep}{4pt}
\begin{adjustbox}{max width=\textwidth}
\begin{tabular}{>{\columncolor{white!10}}l|ccc|ccc|ccc|ccc}
\Xhline{1.5pt}
\rowcolor{white!10}
Method
& \multicolumn{3}{c|}{Gaussian Mixture}
& \multicolumn{3}{c|}{Gaussian Linear}
& \multicolumn{3}{c|}{Gaussian Linear Uniform}
& \multicolumn{3}{c}{SLCP} \\
\rowcolor{white!10}
& $10\text{k}$ & $20\text{k}$ & $30\text{k}$
& $10\text{k}$ & $20\text{k}$ & $30\text{k}$
& $10\text{k}$ & $20\text{k}$ & $30\text{k}$
& $10\text{k}$ & $20\text{k}$ & $30\text{k}$ \\
\hline
NPE
& \cTwoST{$0.582$}{$0.025$} & \cTwoST{$0.545$}{$0.013$} & \cTwoST{$0.559$}{$0.056$}
& \cTwoST{$0.583$}{$0.029$} & \cTwoST{$0.555$}{$0.026$} & \cTwoST{$0.547$}{$0.023$}
& \cTwoST{$\underline{0.600}$}{$0.036$} & \cTwoST{$\underline{0.565}$}{$0.018$} & \cTwoST{$\underline{0.561}$}{$0.024$}
& \cTwoST{$0.881$}{$0.040$} & \cTwoST{$0.850$}{$0.049$} & \cTwoST{$0.837$}{$0.056$} \\
Simformer (Dense)
& \cTwoST{$\underline{0.517}$}{$0.011$} & \cTwoST{$\underline{0.510}$}{$0.007$} & \cTwoST{$\underline{0.509}$}{$0.008$}
& \cTwoST{$0.669$}{$0.061$} & \cTwoST{$0.665$}{$0.070$} & \cTwoST{$0.611$}{$0.059$}
& \cTwoST{$0.656$}{$0.057$} & \cTwoST{$0.638$}{$0.048$} & \cTwoST{$0.622$}{$0.043$}
& \cTwoST{$\underline{0.773}$}{$0.054$} & \cTwoST{$\underline{0.710}$}{$0.050$} & \cTwoST{$\underline{0.674}$}{$0.064$} \\
Simformer (Undirected)
& \cTwoST{$\mathbf{0.516}$}{$0.008$} & \cTwoST{$\mathbf{0.509}$}{$0.006$} & \cTwoST{$\mathbf{0.508}$}{$0.006$}
& \cTwoST{$\mathbf{0.509}$}{$0.006$} & \cTwoST{$\mathbf{0.506}$}{$0.004$} & \cTwoST{$\mathbf{0.503}$}{$0.005$}
& \cTwoST{$\mathbf{0.514}$}{$0.008$} & \cTwoST{$\mathbf{0.508}$}{$0.006$} & \cTwoST{$\mathbf{0.505}$}{$0.004$}
& \cTwoST{$\mathbf{0.637}$}{$0.060$} & \cTwoST{$\mathbf{0.582}$}{$0.044$} & \cTwoST{$\mathbf{0.569}$}{$0.046$} \\
\rowcolor{gray!10}
OneFlowSBI
& \cTwoST{$0.519$}{$0.004$} & \cTwoST{$0.519$}{$0.004$} & \cTwoST{$0.511$}{$0.005$}
& \cTwoST{$\underline{0.550}$}{$0.012$} & \cTwoST{$\underline{0.518}$}{$0.004$} & \cTwoST{$\underline{0.510}$}{$0.003$}
& \cTwoST{$0.630$}{$0.010$} & \cTwoST{$0.586$}{$0.002$} & \cTwoST{$0.566$}{$0.003$}
& \cTwoST{$0.905$}{$0.014$} & \cTwoST{$0.799$}{$0.013$} & \cTwoST{$0.735$}{$0.017$} \\
\Xhline{1.5pt}
\end{tabular}
\end{adjustbox}
\end{subtable}

\vspace{0.2cm}

\begin{subtable}[t]{\textwidth}
\centering
\begin{adjustbox}{max width=\textwidth}
\begin{tabular}{>{\columncolor{white!10}}l|ccc|ccc|ccc}
\Xhline{1.5pt}
\rowcolor{white!10}
Method
& \multicolumn{3}{c|}{Two Moons}
& \multicolumn{3}{c|}{Bernoulli GLM}
& \multicolumn{3}{c}{SIR} \\
\rowcolor{white!10}
& $10\text{k}$ & $20\text{k}$ & $30\text{k}$
& $10\text{k}$ & $20\text{k}$ & $30\text{k}$
& $10\text{k}$ & $20\text{k}$ & $30\text{k}$ \\
\hline
NPE
& \cTwoST{$0.592$}{$0.056$} & \cTwoST{$0.581$}{$0.050$} & \cTwoST{$0.572$}{$0.052$}
& \cTwoST{$0.696$}{$0.030$} & \cTwoST{$\underline{0.635}$}{$0.027$} & \cTwoST{$0.645$}{$0.035$}
& \cTwoST{$\underline{0.824}$}{$0.128$} & \cTwoST{$\underline{0.824}$}{$0.120$} & \cTwoST{$\underline{0.828}$}{$0.122$} \\
Simformer (Dense)
& \cTwoST{$0.556$}{$0.077$} & \cTwoST{$\underline{0.514}$}{$0.013$} & \cTwoST{$\mathbf{0.509}$}{$0.010$}
& \cTwoST{$\underline{0.661}$}{$0.038$} & \cTwoST{$0.638$}{$0.036$} & \cTwoST{$0.624$}{$0.030$}
& \cTwoST{$0.833$}{$0.123$} & \cTwoST{$0.833$}{$0.125$} & \cTwoST{$0.831$}{$0.122$} \\
Simformer (Undirected)
& \cTwoST{$0.556$}{$0.077$} & \cTwoST{$\underline{0.514}$}{$0.013$} & \cTwoST{$\mathbf{0.509}$}{$0.010$}
& \cTwoST{$\mathbf{0.640}$}{$0.040$} & \cTwoST{$\mathbf{0.618}$}{$0.032$} & \cTwoST{$\underline{0.595}$}{$0.027$}
& \cTwoST{$0.828$}{$0.121$} & \cTwoST{$0.828$}{$0.122$} & \cTwoST{$0.833$}{$0.124$} \\
\rowcolor{gray!10}
OneFlowSBI
& \cTwoST{$\mathbf{0.523}$}{$0.004$} & \cTwoST{$\mathbf{0.514}$}{$0.003$} & \cTwoST{$0.512$}{$0.002$}
& \cTwoST{$0.684$}{$0.049$} & \cTwoST{$0.645$}{$0.018$} & \cTwoST{$\mathbf{0.579}$}{$0.009$}
& \cTwoST{$\mathbf{0.807}$}{$0.009$} & \cTwoST{$\mathbf{0.810}$}{$0.013$} & \cTwoST{$\mathbf{0.810}$}{$0.003$} \\
\Xhline{1.5pt}
\end{tabular}
\end{adjustbox}
\end{subtable}

\end{table*}


\begin{table*}[!h]
\centering
\renewcommand{\arraystretch}{1.25}
\setlength{\tabcolsep}{5pt}
\caption{Posterior $\mathrm{MMD}^2$ reported as mean $\pm$ std.\ over $5$ runs and $10$ observations across simulation budgets; lower is better.}
\label{tab:mmd_main}

\begin{subtable}[t]{\textwidth}
\centering
\begin{adjustbox}{max width=\textwidth}
\begin{tabular}{>{\columncolor{white!10}}l|ccc|ccc|ccc}
\Xhline{1.5pt}
\rowcolor{white!10}
Method
& \multicolumn{3}{c|}{SLCP Distractors}
& \multicolumn{3}{c|}{Bernoulli GLM (raw)}
& \multicolumn{3}{c}{Lotka--Volterra} \\
\rowcolor{white!10}
& $10\text{k}$ & $20\text{k}$ & $30\text{k}$
& $10\text{k}$ & $20\text{k}$ & $30\text{k}$
& $10\text{k}$ & $20\text{k}$ & $30\text{k}$ \\
\hline
NPE
& \mmdCell{$\underline{0.110}$}{$0.052$} & \mmdCell{$\underline{0.103}$}{$0.068$} & \mmdCell{$\underline{0.097}$}{$0.065$}
& \mmdCell{$\mathbf{0.117}$}{$0.035$} & \mmdCell{$\mathbf{0.073}$}{$0.051$} & \mmdCell{$\underline{0.051}$}{$0.037$}
& \mmdCell{$0.855$}{$0.339$} & \mmdCell{$0.817$}{$0.294$} & \mmdCell{$0.795$}{$0.279$} \\
Simformer (Dense)
& \mmdCell{$\mathbf{0.099}$}{$0.045$} & \mmdCell{$\mathbf{0.094}$}{$0.057$} & \mmdCell{$\mathbf{0.093}$}{$0.066$}
& \mmdCell{$0.416$}{$0.074$} & \mmdCell{$0.169$}{$0.048$} & \mmdCell{$0.102$}{$0.036$}
& \mmdCell{$\underline{0.574}$}{$0.048$} & \mmdCell{$0.581$}{$0.067$} & \mmdCell{$0.575$}{$0.069$} \\
Simformer (Undirected)
& \mmdCell{$0.229$}{$0.120$} & \mmdCell{$0.228$}{$0.119$} & \mmdCell{$0.227$}{$0.119$}
& \mmdCell{$0.579$}{$0.011$} & \mmdCell{$0.392$}{$0.114$} & \mmdCell{$0.137$}{$0.088$}
& \mmdCell{$0.576$}{$0.038$} & \mmdCell{$\underline{0.574}$}{$0.048$} & \mmdCell{$\underline{0.573}$}{$0.048$} \\
\rowcolor{gray!10}
OneFlowSBI
& \mmdCell{$0.242$}{$0.046$} & \mmdCell{$0.132$}{$0.044$} & \mmdCell{$0.110$}{$0.034$}
& \mmdCell{$\underline{0.234}$}{$0.084$} & \mmdCell{$\underline{0.082}$}{$0.060$} & \mmdCell{$\mathbf{0.041}$}{$0.023$}
& \mmdCell{$\mathbf{0.567}$}{$0.006$} & \mmdCell{$\mathbf{0.558}$}{$0.010$} & \mmdCell{$\mathbf{0.557}$}{$0.007$} \\
\Xhline{1.5pt}
\end{tabular}
\end{adjustbox}
\end{subtable}

\vspace{0.2cm}

\begin{subtable}[t]{\textwidth}
\centering
\setlength{\tabcolsep}{4pt}
\begin{adjustbox}{max width=\textwidth}
\begin{tabular}{>{\columncolor{white!10}}l|ccc|ccc|ccc|ccc}
\Xhline{1.5pt}
\rowcolor{white!10}
Method
& \multicolumn{3}{c|}{Gaussian Mixture}
& \multicolumn{3}{c|}{Gaussian Linear}
& \multicolumn{3}{c|}{Gaussian Linear Uniform}
& \multicolumn{3}{c}{SLCP} \\
\rowcolor{white!10}
& $10\text{k}$ & $20\text{k}$ & $30\text{k}$
& $10\text{k}$ & $20\text{k}$ & $30\text{k}$
& $10\text{k}$ & $20\text{k}$ & $30\text{k}$
& $10\text{k}$ & $20\text{k}$ & $30\text{k}$ \\
\hline
NPE
& \mmdCell{$7.93$}{$0.003$} & \mmdCell{$5.45$}{$0.002$} & \mmdCell{$5.35$}{$0.002$}
& \mmdCell{$\underline{2.76}$}{$0.018$} & \mmdCell{$1.86$}{$0.001$} & \mmdCell{$1.72$}{$0.001$}
& \mmdCell{$\underline{4.09}$}{$0.002$} & \mmdCell{$2.34$}{$0.009$} & \mmdCell{$2.25$}{$0.001$}
& \mmdCell{$\underline{0.064}$}{$0.034$} & \mmdCell{$\underline{0.042}$}{$0.025$} & \mmdCell{$0.041$}{$0.023$} \\
Simformer (Dense)
& \mmdCell{$1.62$}{$0.016$} & \mmdCell{$\mathbf{0.30}$}{$0.021$} & \mmdCell{$0.38$}{$0.029$}
& \mmdCell{$10.02$}{$0.009$} & \mmdCell{$8.15$}{$0.009$} & \mmdCell{$5.17$}{$0.006$}
& \mmdCell{$5.91$}{$0.003$} & \mmdCell{$3.86$}{$0.002$} & \mmdCell{$3.09$}{$0.001$}
& \mmdCell{$0.067$}{$0.047$} & \mmdCell{$0.044$}{$0.036$} & \mmdCell{$\underline{0.028}$}{$0.036$} \\
Simformer (Undirected)
& \mmdCell{$\mathbf{0.98}$}{$0.001$} & \mmdCell{$\underline{0.38}$}{$0.003$} & \mmdCell{$\underline{0.29}$}{$0.004$}
& \mmdCell{$\mathbf{0.24}$}{$0.002$} & \mmdCell{$\mathbf{0.17}$}{$0.001$} & \mmdCell{$\mathbf{0.08}$}{$0.007$}
& \mmdCell{$\mathbf{0.20}$}{$0.001$} & \mmdCell{$\mathbf{0.12}$}{$0.007$} & \mmdCell{$\mathbf{0.11}$}{$0.008$}
& \mmdCell{$\mathbf{0.010}$}{$0.013$} & \mmdCell{$\mathbf{0.003}$}{$0.004$} & \mmdCell{$\mathbf{0.005}$}{$0.006$} \\
\rowcolor{gray!10}
OneFlowSBI
& \mmdCell{$\underline{1.26}$}{$0.004$} & \mmdCell{$0.53$}{$0.003$} & \mmdCell{$\underline{0.30}$}{$0.001$}
& \mmdCell{$3.41$}{$0.011$} & \mmdCell{$\underline{0.96}$}{$0.008$} & \mmdCell{$\underline{0.58}$}{$0.005$}
& \mmdCell{$6.23$}{$0.029$} & \mmdCell{$\underline{1.43}$}{$0.002$} & \mmdCell{$\underline{0.86}$}{$0.017$}
& \mmdCell{$0.180$}{$0.022$} & \mmdCell{$0.095$}{$0.011$} & \mmdCell{$0.040$}{$0.013$} \\
\Xhline{1.5pt}
\end{tabular}
\end{adjustbox}
\end{subtable}

\vspace{0.2cm}

\begin{subtable}[t]{\textwidth}
\centering
\begin{adjustbox}{max width=\textwidth}
\begin{tabular}{>{\columncolor{white!10}}l|ccc|ccc|ccc}
\Xhline{1.5pt}
\rowcolor{white!10}
Method
& \multicolumn{3}{c|}{Two Moons}
& \multicolumn{3}{c|}{Bernoulli GLM}
& \multicolumn{3}{c}{SIR} \\
\rowcolor{white!10}
& $10\text{k}$ & $20\text{k}$ & $30\text{k}$
& $10\text{k}$ & $20\text{k}$ & $30\text{k}$
& $10\text{k}$ & $20\text{k}$ & $30\text{k}$ \\
\hline
NPE
& \mmdCell{$1.26$}{$0.005$} & \mmdCell{$1.14$}{$0.001$} & \mmdCell{$1.38$}{$0.005$}
& \mmdCell{$56.62$}{$0.029$} & \mmdCell{$25.52$}{$0.015$} & \mmdCell{$34.64$}{$0.023$}
& \mmdCell{$\underline{0.376}$}{$0.224$} & \mmdCell{$\underline{0.381}$}{$0.209$} & \mmdCell{$\underline{0.379}$}{$0.215$} \\
Simformer (Dense)
& \mmdCell{$\underline{1.25}$}{$0.001$} & \mmdCell{$\underline{0.40}$}{$0.007$} & \mmdCell{$\underline{0.24}$}{$0.001$}
& \mmdCell{$\underline{28.58}$}{$0.024$} & \mmdCell{$18.54$}{$0.014$} & \mmdCell{$16.73$}{$0.0163$}
& \mmdCell{$0.411$}{$0.225$} & \mmdCell{$0.416$}{$0.217$} & \mmdCell{$0.411$}{$0.213$} \\
Simformer (Undirected)
& \mmdCell{$\underline{1.25}$}{$0.001$} & \mmdCell{$0.51$}{$0.007$} & \mmdCell{$\underline{0.24}$}{$0.001$}
& \mmdCell{$29.47$}{$0.024$} & \mmdCell{$\underline{17.18}$}{$0.014$} & \mmdCell{$\underline{10.64}$}{$0.009$}
& \mmdCell{$0.405$}{$0.227$} & \mmdCell{$0.407$}{$0.215$} & \mmdCell{$0.409$}{$0.216$} \\
\rowcolor{gray!10}
OneFlowSBI
& \mmdCell{$\mathbf{1.06}$}{$0.005$} & \mmdCell{$\mathbf{0.39}$}{$0.003$} & \mmdCell{$\textbf{0.22}$}{$0.002$}
& \mmdCell{$\mathbf{25.30}$}{$0.008$} & \mmdCell{$\mathbf{16.70}$}{$0.002$} & \mmdCell{$\mathbf{9.05}$}{$0.002$}
& \mmdCell{$\mathbf{0.344}$}{$0.019$} & \mmdCell{$\mathbf{0.344}$}{$0.026$} & \mmdCell{$\mathbf{0.359}$}{$0.018$} \\
\Xhline{1.5pt}
\end{tabular}
\end{adjustbox}

{\tiny\emph{Note:} For readability, MMD mean values for the Gaussian Mixture, Gaussian Linear, Gaussian Linear Uniform, Two Moons, and Bernoulli GLM tasks are reported on a scale of $10^{-3}$.}
\end{subtable}

\end{table*}

\clearpage

\end{document}